\title{Improving LaCAM for Scalable Eventually Optimal Multi-Agent Pathfinding}
\author{
  Keisuke Okumura$^{1,2}$
  \affiliations
  $^1$National Institute of Advanced Industrial Science and Technology (AIST)\\
  $^2$University of Cambridge
  \emails
  ko393@cl.cam.ac.uk
}
\begin{document}
\maketitle
\begin{abstract}
  This study extends the recently-developed LaCAM algorithm for multi-agent pathfinding (MAPF).
  LaCAM is a sub-optimal search-based algorithm that uses lazy successor generation to dramatically reduce the planning effort.
  We present two enhancements.
  First, we propose its anytime version, called \lacamstar, which eventually converges to optima, provided that solution costs are accumulated transition costs.
  Second, we improve the successor generation to quickly obtain initial solutions.
  Exhaustive experiments demonstrate their utility.
  For instance, \lacamstar sub-optimally solved 99\% of the instances retrieved from the MAPF benchmark, where the number of agents varied up to a thousand, within ten seconds on a standard desktop PC, while ensuring eventual convergence to optima;
  developing a new horizon of MAPF algorithms.
\end{abstract}

\section{Introduction}
The \emph{multi-agent pathfinding (MAPF)} problem aims to assign collision-free paths for multiple agents on a graph.
To date, various MAPF algorithms have been developed, motivated by various applications such as warehouse automation~\cite{wurman2008coordinating}.
Ideal MAPF algorithms will be complete, optimal, quick, and scalable.
However, there is generally a tradeoff between the former two and the latter two.
Conversely, \emph{the primary challenge of developments in MAPF algorithms is to guarantee solvability and solution quality, while suppressing planning efforts to secure speed and scalability}.

To break this tradeoff, we present two enhancements to the recently-developed algorithm called \emph{LaCAM (lazy constraints addition search for MAPF)}~\cite{okumura2023lacam}.
It is complete, sub-optimal, and search-based (akin to \astar search) that uses lazy successor generation.
The first enhancement is its anytime version called \emph{\lacamstar} that eventually converges to optima, provided that solution costs are accumulated transition costs.
Since solving MAPF optimally is computationally intractable~\cite{yu2013structure}, one practical approach to large instances is obtaining sub-optimal solutions and then refining their quality as time allows.
\lacamstar meets such demands.
The second enhancement is for the successor generation, i.e., tuning of the PIBT algorithm~\cite{okumura2022priority}, so as to quickly obtain initial solutions.

{
  \setlength{\tabcolsep}{1pt}
  \newcommand{\entry}[5]{
    &
    \begin{minipage}[t]{0.045\paperwidth}#1\end{minipage} &   
    \begin{minipage}[t]{0.125\paperwidth}#2\end{minipage} &   
    \begin{minipage}[t]{0.063\paperwidth}#3\end{minipage} &   
    \begin{minipage}[t]{0.072\paperwidth}#4\end{minipage} &   
    \begin{minipage}[t]{0.035\paperwidth}#5\end{minipage}     
  }
  \newcommand{\tri}[1]{\textcolor[HTML]{#1}{\tiny\m{\blacksquare}}}
  \medskip
  \begin{figure}[t!]
    \centering
    \includegraphics[width=1.0\linewidth]{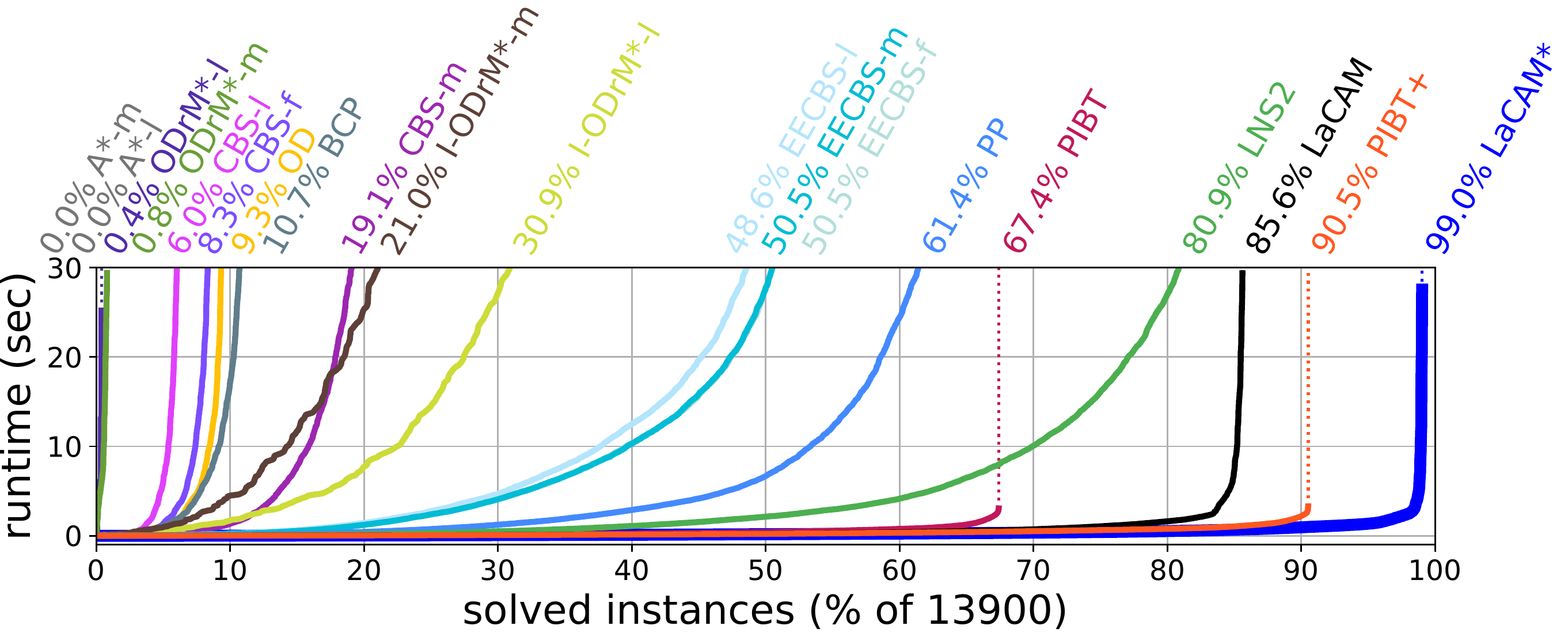}
    \\\smallskip
    \scriptsize
    \begin{tabular}{rlllll}
      \toprule
      & algorithm & reference & solvability & optimality & metrics
      \\\midrule
      \tri{757575}
      \entry{\astar}{\cite{hart1968formal}}{complete}{optimal}{m, l}
      \\\rc
      \tri{FFC107}
      \entry{OD}{\cite{standley2010finding}}{complete}{sub-optimal}{(greedy)}
      \\
      \tri{689F38}\tri{512DA8}
      \entry{ODrM$^\ast$}{\cite{wagner2015subdimensional}}{complete}{optimal}{m, l}
      \\\rc
      \tri{5D4037}\tri{CDDC39}
      \entry{I-ODrM$^\ast$}{\cite{wagner2015subdimensional}}{complete}{bnd. sub-opt.}{m, l}
      \\
      \tri{607D8B}
      \entry{BCP}{\cite{lam2022branch}}{solution cmp.}{optimal}{f}
      \\\rc
      \tri{7C4DFF}\tri{9C27B0}\tri{E040FB}
      \entry{CBS}{\cite{sharon2015conflict}}{solution cmp.}{optimal}{f, m, l}
      \\
      \tri{B2DFDB}\tri{00BCD4}\tri{B3E5FC}
      \entry{EECBS}{\cite{li2021eecbs}}{solution cmp.}{bnd. sub-opt.}{f, m, l}
      \\\rc
      \tri{448AFF}
      \entry{PP}{\cite{silver2005cooperative}}{incomplete}{sub-optimal}{}
      \\
      \tri{4CAF50}
      \entry{LNS2}{\cite{li2022mapf}}{incomplete}{sub-optimal}{}
      \\\rc
      \tri{C2185B}\tri{FF5722}
      \entry{PIBT$^{(+)}$}{\cite{okumura2022priority}}{incomplete}{sub-optimal}{}
      \\
      \tri{000000}
      \entry{LaCAM}{\cite{okumura2023lacam}}{complete}{sub-optimal}{}
      \\\rc
      \tri{0000FF}
      \entry{\w{\lacamstar}}{\w{this paper}}{\w{complete}}{\w{eventually opt.}}{\w{m, l}}
      \\
      \bottomrule
    \end{tabular}
    \caption{
      Performance on the MAPF benchmark.
      \emph{upper:}~The number of solved instances among 13,900 instances on 33 four-connected grid maps, retrieved from~\protect\cite{stern2019def}.
      The size of agents varies up to 1,000.
      `-f,' `-m,' and `-l' respectively mean that an algorithm tries to minimize flowtime, makespan, or sum-of-loss.
      The scores of \lacamstar are for initial solutions.
      \emph{lower:}~Representative or state-of-the-art MAPF algorithms.
      ``solution cmp.'' means that an algorithm ensures to find solutions for solvable instances but it never identifies unsolvable ones.
      ``bnd. sub-opt.'' means a bounded sub-optimal algorithm.
      Their sub-optimality was set to five.
    }
    \label{fig:percentage}
  \end{figure}
}

With these enhancements, we empirically demonstrate that \lacamstar can break the trade-off.
For instance, it sub-optimally solved 99\% of the instances retrieved from the MAPF benchmark~\cite{stern2019def} within ten seconds while guaranteeing the eventual optimality, on a standard desktop PC.
As illustrated in \cref{fig:percentage}, this result is beyond frontiers of existing MAPF algorithms.
In what follows, we present preliminaries, \lacamstar, improved successor generation, empirical results, and discussion in order.
The appendix and code are available at \url{https://kei18.github.io/lacam2/}.

{
  \setlength{\tabcolsep}{1pt}
  \newcommand{\entry}[1]{
    \begin{minipage}{0.19\linewidth}
      \includegraphics[width=1\linewidth]{fig/raw/lacam-#1.pdf}
    \end{minipage}
  }
  \begin{figure*}[th!]
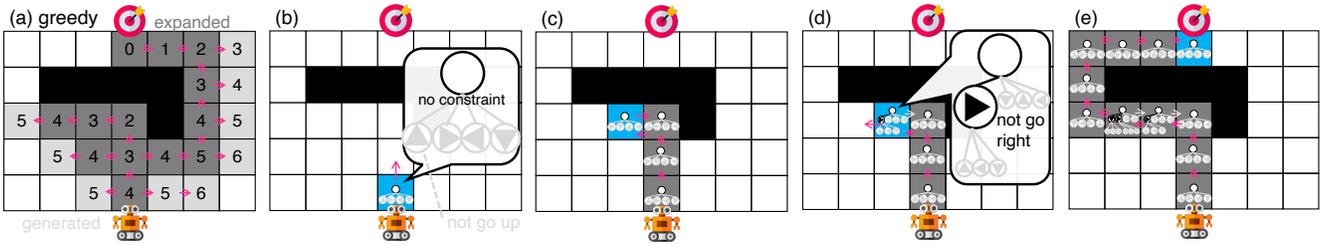

    \centering
    \begin{tabular}{ccccc}
      \entry{00} &
      \entry{01} &
      \entry{02} &
      \entry{03} &
      \entry{04}
    \end{tabular}
    \vspace{-0.2cm}
    \caption{Illustration of LaCAM using single-agent grid pathfinding.}
    \label{fig:example}
  \end{figure*}
}

\section{Preliminaries}
\label{sec:prelim}

\subsection{Notation, Problem Definition, and Assumption}
\paragraph{Notations.}
$S[k]$ denotes the $k$-th element of the sequence $S$, where the index starts at one.
For convenience, we use $\bot$ as an ``undefined'' or ``not found'' sign.

\paragraph{Instance.}
An \emph{MAPF instance} is defined by a graph $G = (V, E)$, a set of agents $A = \{1, \ldots, n\}$, a tuple of distinct starts $\S = (s_i \in V)^{i \in A}$ and goals $\G = (g_i \in V)^{i \in A}$.

\paragraph{Configuration.}
A \emph{configuration} is a tuple of locations for all agents.
For instance, $\Q = (v_1, v_2, \ldots, v_n) \in V^{|A|}$ is a configuration, where $\Q[i] = v_i$ is the location of agent $i \in A$.
The start and goal configurations are \S and \G, respectively.

\paragraph{Collision and Connectivity.}
A configuration \Q has a \emph{vertex collision} when there is a pair of agents $i, j \in A, i \neq j$, such that $\Q[i] = \Q[j]$.
Two configurations $X$ and $Y$ have an \emph{edge collision} when there is a pair of agents $i, j \in A, i \neq j$, such that $X[i] = Y[j] \land Y[i] = X[j]$.
Let $\neigh(v)$ be a set of vertices adjacent to $v \in V$.
Two configurations $X$ and $Y$ are \emph{connected} when $Y[i] \in \neigh(X[i]) \cup \{X[i]\}$ for all $i \in A$, and, there are neither vertex nor edge collisions in $X$ and $Y$.

\paragraph{Decision Problem.}
Given an MAPF instance, an \emph{MAPF problem} is a decision problem that asks existence of a sequence of configurations $\paths = (\Q_0, \Q_1, \ldots, \Q_k)$, such that $\Q_0 = \S$, $\Q_k = \G$, and any two consecutive configurations in \paths are connected.
A \emph{solution} to MAPF is a \paths that satisfies the conditions.
To align with the literature, the index of \paths exceptionally starts at zero (i.e., $\ct{0} = \Q_0$).
An algorithm is said to be \emph{complete} when it returns solutions for solvable instances and reports non-existence for unsolvable instances.
Otherwise, it is called \emph{incomplete}.

\paragraph{Optimization Problems.}
Given a transition (or edge) cost between two configurations, $\cost_e: V^{|A|} \times V^{|A|} \mapsto \mathbb{R}_{\geq 0}$, we aim at minimizing accumulated transition costs of a solution \paths, denoted as $\cost(\paths) \defeq \sum_{t=0}^{k-1}\cost_e \bigl(\ct{t}, \ct{t+1}\bigr)$.
This notation can represent various optimization metrics.
For instance, \cost is called \emph{makespan} when $\cost_e(X, Y) \defeq 1$.
It is called \emph{sum-of-fuels} (aka. \emph{total travel distance}) when $\cost(X, Y) \defeq |\{ i \in A; X[i] \neq Y[i]\}|$.
\emph{Sum-of-loss} counts actions of non-staying at goals, defined by $\cost_e(X, Y) \defeq |\{ i \in A \mid \lnot(X[i] = Y[i] = g_i)\}|$.
A solution \paths is \emph{optimal} when there is no solution $\paths'$ such that $\cost(\paths') < \cost(\paths)$.

\paragraph{Remarks for Flowtime.}
Another common metric of MAPF is \emph{flowtime} (aka. \emph{sum-of-costs}): $\sum_{i \in A} t_i$, where $t_i \leq k$ is the earliest timestep such that $\loc{i}{t_i} = \loc{i}{t_i+1} = \ldots = \loc{i}{k} = g_i$.
The difference from sum-of-loss is cost contribution of agents who once reach their goal and leave there temporarily.
The flowtime is history-dependent on paths of each agent;
hence, it is impossible to represent as it is with accumulative costs.%
\footnote{
However, it is worth noting that flowtime can be defined by introducing virtual goals, where once an agent has arrived there, it cannot move anywhere in the future.
}
Instead, this paper considers sum-of-loss as seen in~\cite{standley2010finding,wagner2015subdimensional}.

\paragraph{Admissible Heuristic.}
We assume an \emph{admissible heuristic} $\h: V^{|A|} \mapsto \mathbb{R}_{\geq 0}$, such that $\h(Q)$ is always the optimal cost from \Q to \G or less;
e.g., $\h(\Q) \defeq \sum_{i \in A}\dist(\Q[i], g_i)$ is available for the sum-of-\{loss, fuels\}, where $\dist: V\times V\mapsto \mathbb{N}_{\geq 0}$ is the shortest path length on $G$.

\paragraph{Understanding MAPF as Graph Pathfinding.}
Using configurations, consider a graph $H$ comprising vertices that represent configurations, and edges that represent the connectivity of configurations.
Then, by regarding \S and \G as start and goal vertices respectively, \emph{optimal MAPF is equivalent to the shortest pathfinding problem on $H$}.
This is a key perspective to understand LaCAM$^{(\ast)}$.

\subsection{LaCAM}
LaCAM~\cite{okumura2023lacam} was originally developed as a sub-optimal complete MAPF algorithm.
In a nutshell, \emph{it is a graph pathfinding algorithm}, like \astar, but some parts are specific to MAPF.
Below, we provide the essence of the graph pathfinding part, using an example of single-agent grid pathfinding.
The details of the MAPF-specific part are delivered in the appendix.

\paragraph{Classical Search.}
See~\cref{fig:example}a that illustrates how a typical search scheme solves grid pathfinding.
Specifically, we show the greedy best-first search with a heuristic of the Manhattan distance.
Here, a location of the agent corresponds to a \emph{configuration} (aka. \emph{state}) of the search.
From the start configuration, the search generates three successor nodes: left, up, and right.
Each node corresponds to one configuration.
The search then takes one of the generated nodes according to the heuristic, and generates successors.
This procedure continues until finding the goal configuration.

\paragraph{Branching Factor.}
Consider how many nodes are generated to estimate the search effort.
Though the solution length is 8, 22 nodes are generated.
This number is related to the number of connected configurations (i.e., \emph{branching factor}).
It is four in grid pathfinding, therefore, the number of node generations remains acceptable.
However, the branching factor of MAPF is exponential for the number of agents.
Consequently, the generation itself becomes intractable.
This is why a vanilla \astar is hopeless to solve large MAPF instances.

\paragraph{Configuration Generator.}
LaCAM tries to relieve this huge-branching-factor issue when a \emph{configuration generator} is available.
Given a configuration and \emph{constraints}, it generates a connected configuration following constraints.
Constraints should be embodied by each domain.
In this example, consider a constraint as a prohibition of direction, such as \emph{not} moving up, left, right, or down.

\paragraph{Constraint Tree.}
Each search node of LaCAM contains not only a configuration but also constraints, taking the form of a tree structure.
For each node invoke, LaCAM gradually develops the tree by \emph{low-level search}, implemented by, e.g., breadth-first search (BFS).
A node on the tree has a constraint and represents several constraints by tracing a path to the root.

\paragraph{Search Flow.}
We now explain LaCAM using \cref{fig:example}b--e, with a depth-first search (DFS) style.
The attempt to find a sequence of configurations is called \emph{high-level search}.
At first, a search node of the start is examined (\cref{fig:example}b).
The node poses no constraints for the first invoke, meaning that the configuration generator can output any connected configuration.
Suppose that the generator outputs an ``up'' configuration, following the Manhattan distance guide, illustrated by the pink arrow.
Preparing for the second invoke, the node expands a constraint tree with new constraints (e.g., ``not go up'').
The high-level search does not discard the examined node immediately, rather, it discards when all connected configurations have been generated (i.e., when the low-level search completes).
Next, \cref{fig:example}c--d show an example of the second invoke of nodes.
In \cref{fig:example}c, the generator outputs an already-known configuration.
Since this example assumes DFS, LaCAM examines the blue-colored node again in \cref{fig:example}d.
This time, the generator must follow a constraint ``not go right'' and its parent ``no constraint.''
The example then generates a ``left'' configuration.
The search continues until finding the goal (\cref{fig:example}e) and obtains a solution path by backtracking.

\paragraph{Adaptation to MAPF.}
With appropriate designs of constraints and tree construction, LaCAM can be an exhaustive search and guarantees completeness for graph pathfinding problems.
In \cite{okumura2023lacam}, such an example is shown for MAPF by letting a constraint specify which agent is where in the next configuration.
Moreover, LaCAM can greatly decrease the number of node generations if the configuration generator is promising in outputting configurations that are close to the goal.
This reduction could be a silver bullet to achieve quick planning, especially in planning problems where the branching factor is huge like MAPF.
The remaining question is a realization of good configuration generators.
In the original paper, \emph{PIBT (priority inheritance with backtracking)}~\cite{okumura2022priority} served as it, explained in \cref{sec:prelim:pibt}.

\paragraph{Pseudocode.}
\Cref{algo:lacam} shows DFS-based LaCAM.
Each search node $\N$ stores \emph{(i)}~a configuration, \emph{(ii)}~a constraint tree embodied by a queue (assuming BFS) and \emph{(iii)}~a pointer to a parent node (see \cref{algo:lacam:init-node}).
Nodes are stored in an \open list and \explored table, akin to general search schema, and are processed one by one.
We abstract how to create constraint trees for MAPF by \cref{algo:lacam:low-level-expansion}, which is elaborated in the appendix.

{
  \begin{algorithm}[t!]
    \caption{LaCAM. $\C\sub{init}$ means ``no constraint.''}
    \label{algo:lacam}
    \small
    \begin{algorithmic}[1]
    \Input{MAPF instance}
    \Output{solution or \nosolution}
      \State initialize \open, \explored
      \Comment{stack, hash table}
      \State $\N\init \leftarrow \bigl\langle~
        \config: \S,
        \tree: \llbracket~\C\init~\rrbracket,
        \parent: \bot
        ~\bigr\rangle$
      \label{algo:lacam:init-node}
      \State $\open.\push\left(\N\init\right)$;~~$\explored[\S] = \N\init$
      \While{$\open \neq \emptyset$}
      \State $\N \leftarrow \open.\funcname{top}()$
      \IfSingle{$\N.\config = \G$}{\Return $\backtrack(\N)$}
      \IfSingle{$\N.\tree = \emptyset$}{$\open.\pop()$;~\Continue}
      \Comment{discard node}
      \State $\C \leftarrow \N.\tree.\pop()$
      \Comment{get constraint}
      \State $\funcname{low\_level\_expansion}(\N, \C)$
      \Comment{proceed low-level search}
      \label{algo:lacam:low-level-expansion}
      \State $\Q\new \leftarrow \funcname{configuration\_generator}(\N, \C)$
      \label{algo:lacam:new-config}
      \IfSingle{$\Q\new = \bot$}{\Continue}
      \Comment{generator may fail}
      \IfSingle{$\explored\left[\Q\new\right] \neq \bot$}{\Continue}
      \label{algo:lacam:closed}
      \State $\N\new \leftarrow \langle~
        \config: \Q\new,
        \tree: \left\llbracket~\C\init~\right\rrbracket,
        \parent: \N~\bigr\rangle$
        \label{algo:lacam:successor}
      \State $\open.\push\left(\N\new\right)$;\;$\explored\left[\Q\new\right] = \N\new$
      \EndWhile
      \State \Return \nosolution
    \end{algorithmic}
  \end{algorithm}
}

{
  \begin{algorithm}[t!]
    \caption{PIBT}
    \label{algo:pibt}
    \small
    \begin{algorithmic}[1]
      \Input{configuration $\Q\from$, agents $A$, goals $( g_1, \ldots g_n )$}
      \Output{configuration $\Q\to$ (each element is initialized with $\bot$)}
      \State \textbf{for}~$i \in A$~\textbf{do};~
             \textbf{if}~{$\Q\to[i] = \bot$}~\textbf{then}~{$\PIBT(i)$}
      \label{algo:pibt:call-func}
      \State \Return $\Q\to$
      \label{algo:pibt:top:end}
      \Procedure{\PIBT}{$i$}
      \Comment{return \valid or \invalid}
      \label{algo:pibt:recursive:start}
      \State $C \leftarrow \neigh\left(\Q\from[i]\right) \cup \left\{\Q\from[i]\right\}$
      \label{algo:pibt:cand}
      \Comment{candidate vertices}
      \State sort $C$ in ascending order of $\dist(u, g_i)$ where $u \in C$
      \label{algo:pibt:sort}
      \For{$v \in C$}
      \label{algo:pibt:loop-start}
      \IfSingle{collisions in $\Q\to$ supposing $\Q\to[i]=v$}{\Continue}
      \label{algo:pibt:collision-check}
      \State $\Q\to[i] \leftarrow v$
      \label{algo:pibt:reserve}
      \If{$\exists j \in A~\text{s.t.}~j \neq i \land \Q\from[j]=v \land \Q\to[j]=\bot$}
      \label{algo:pibt:check-k}
      \IfSingle{$\PIBT(j)= \invalid$}{\Continue}
      \label{algo:pibt:call-pi}
      \EndIf
      \State \Return~\valid
      \Comment{assignment done}
      \label{algo:pibt:valid}
      \EndFor
      \label{algo:pibt:loop-end}
      \State $\Q\to[i] \leftarrow \Q\from[i]$;~\Return~\invalid
      \label{algo:pibt:stay}
      \EndProcedure
      \label{algo:pibt:recursive:end}
    \end{algorithmic}
  \end{algorithm}
}

{
  \tikzset{
    vertex/.style={circle,draw,black,align=center,inner sep=0.1cm, minimum size=0.6cm,anchor=center},
    pi/.style={very thick,->},
  }
  \newcommand{\edgesize}{0.5}
  \newcommand{\drawgraph}{
    \node[vertex](v1) at (0, 0) {};
    \node[vertex,right=\edgesize of v1.center](v2) {};
    \node[vertex,right=\edgesize of v2.center](v3) {};
    \node[vertex,above=\edgesize of v2.center](v4) {};
    \foreach \u / \v in {v1/v2,v2/v3,v2/v4,v4/v3}
    \draw[] (\u) -- (\v);
  }
  \newcommand{\cross}{$\mathbin{\tikz [x=1.4ex,y=1.4ex,line width=.2ex,black] \draw (0,0) -- (1,1) (0,1) -- (1,0);}$}
  \begin{figure}[t!]
    \centering
    \scalebox{0.8}{
    \begin{tikzpicture}
      \begin{scope}[shift={(0, 0)}]
        \drawgraph
        \node[vertex,fill=blue,text=white] at (v4) {\large$i$};
        \node[vertex,fill=teal,text=white] at (v1) {\large$k$};
        \node[vertex,fill=red,text=white,label=above:{$v$}] at (v3) {\large$j$};
        \node[above right=-0.3cm of v3]{\cross};
        \draw[pi](v4)--(v3);
        \draw[pi](v1)--(v2);
        \node[anchor=east,align=left] at (0.5, 0.75) {(a)~fixed order:\\$i, k, j$};
      \end{scope}
      \begin{scope}[shift={(4.6, 0)}]
        \drawgraph
        \node[vertex,fill=blue,text=white] at (v4) {\large$i$};
        \node[vertex,fill=teal,text=white] at (v1) {\large$k$};
        \node[vertex,fill=red,text=white] at (v3)  {\large$j$};
        \draw[pi](v4)--(v3);
        \draw[pi](v3)--(v2);
        \draw[pi](v1) to[out=45,in=0] ($(v1)+(0,0.7)$) to[out=180,in=110] (v1);
        \node[anchor=east,align=left] at (0.3, 0.75) {(b)~with PIBT:\\$i$ first};
      \end{scope}
    \end{tikzpicture}
    }
    \caption{
      Concept of PIBT.
      $\Q\from$ is illustrated.
      Bold arrows represent assignments of $\Q\to$.
      \emph{(a)}~Consider a fixed assignment order of $i$, $k$, and $j$.
      If $i$ and $k$ are assigned following the illustrated arrows, $j$ has no candidate vertex as $\Q\to[j]$ (annotated with $\times$).
      \emph{(b)}~
      This pitfall is overcome by doing the assignment for $j$ prior to $k$, reacting to $i$'s assignment request.
    }
    \label{fig:pibt-example}
  \end{figure}
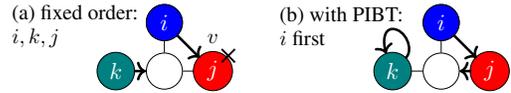
}

\subsection{PIBT}
\label{sec:prelim:pibt}
PIBT~\cite{okumura2022priority} was originally developed to solve MAPF iteratively.
In a nutshell, \emph{it is a configuration generator, which generates a new connected configuration ($\Q\to$), given another ($\Q\from$) as input}.
By continuously generating configurations, PIBT can generate a solution for MAPF.

\paragraph{Concept.}
To determine $\Q\to$, PIBT sequentially assigns a vertex to each agent while avoiding assignments that trigger collisions.
This assignment order adaptively changes.
Specifically, before fixing $\Q\to[i]$ to $v \in \neigh(\Q\from[i]) \cup \{\Q\from[i]\}$, PIBT first checks the existence of $j \in A$ such that $\Q\from[j] = v$.
If such $j$ exists, $j$ may have no candidate vertex for $\Q\to[j]$, due to collision avoidance (see \cref{fig:pibt-example}a).
Therefore, PIBT next determines $\Q\to[j]$ prior to locations of other agents (e.g., $k$ in \cref{fig:pibt-example}).
This scheme is called \emph{priority inheritance}.
If $\Q\to[j]$ is successfully assigned, $\Q\to[i]$ is fixed to $v$ (\cref{fig:pibt-example}b);
otherwise, $\Q\to[i]$ needs to take another vertex other than $v$.

\paragraph{Pseudocode.}
\Cref{algo:pibt} implements the concept above, by recursively calling a procedure \PIBT (\cref{algo:pibt:recursive:start}--), which takes an agent $i$ and eventually assigns $\Q\to[i]$.
The assignment attempts of \reflines{algo:pibt:loop-start}{algo:pibt:loop-end} are performed for candidate vertices regarding $\Q\from[i]$, in ascending order of the distance toward $g_i$.
The attempts continue until $\Q\to[i]$ is determined, resulting in \valid outcome (\cref{algo:pibt:valid}).
When all attempts failed, $\PIBT(i)$ assigns $\Q\from[i]$ to $\Q\to[i]$ and returns \invalid (\cref{algo:pibt:stay}).
Priority inheritance is triggered as necessary, taking the form of calling \PIBT for another agent $j$ (\cref{algo:pibt:call-pi}).
The success of priority inheritance triggers receipt of \valid;
otherwise, \invalid is backed, and then $i$ continues the assignment attempts.

\paragraph{Dynamic Priorities.}
In addition to priority inheritance, PIBT prioritizes assignments for agents that are not on their goal, which is done by sorting $A$ of \cref{algo:pibt:call-func} in that way.
This scheme is convenient for lifelong scenarios wherein all agents are not necessarily being goals simultaneously.

\section{\lacamstar: Eventually Optimal Algorithm}
\Cref{algo:star} presents \lacamstar.
The same lines as LaCAM (\cref{algo:lacam}) are grayed out.
The blue-colored lines are not necessary from the theoretical side but are effective in speeding up the search.
The main differences from LaCAM are two:
\emph{(i)}~it continues the search when finding the goal configuration \G, and, \emph{(ii)}~it rewrites parent relations between search nodes as necessary.
For convenience, the transition cost $\cost_e$ and admissible heuristic \h can take nodes as arguments, instead of configurations.
Below, the updated parts are explained.

\paragraph{Keeping Goal Node.}
\lacamstar retains the goal node $\N\goal$, rather than immediately returning solutions when first finding the goal \G (\cref{algo:star:goal}).
The search terminates when there is no remaining node in \open;
otherwise, there is an interruption from users such as timeout (\cref{algo:star:while-start}).
A solution is then constructed by backtracking from $\N\goal$ (\reflines{algo:star:optimal}{algo:star:sub-optimal}).
Doing so makes \lacamstar an anytime algorithm, that is, \emph{after finding the goal node, it is interruptible whenever a solution is required, while gradually refining solution quality as time allows}.

\paragraph{Search Node Ingredients.}
Each high-level node contains a set \neighbors that stores connected configurations (i.e., nodes) and g-value that represents cost-to-come from the start \S (\cref{algo:star:init-node}).
They are initialized and updated appropriately when finding a new configuration (\reflines{algo:star:new-node}{algo:star:append-neighbor}).

\paragraph{Updating Parents and Costs.}
To maintain the optimality, when finding an already known configuration, \lacamstar updates \neighbors (\cref{algo:star:add-neighbor-known}).
This is followed by updates of g-value and \parent, performed by Dijkstra's algorithm~(\reflines{algo:star:dijkstra-start}{algo:star:dijkstra-end}).
\Cref{fig:rewrite} illustrates the update.

\paragraph{Discarding Redundant Nodes.}
Once the goal node is found, \lacamstar discards nodes that do not contribute to improving solution quality (\cref{algo:star:branching}).
It also revives nodes as necessary when their $g$-values are updated (\cref{algo:star:reinsert}).

\begin{theorem}
  \lacamstar (\cref{algo:star}) is complete and optimal.
  \label{thrm:optimality}
\end{theorem}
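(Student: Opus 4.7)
The plan is to exploit the graph-pathfinding viewpoint from the preliminaries: let $H$ be the (exponentially large) graph whose vertices are configurations in $V^{|A|}$, whose edges are pairs of connected configurations, and whose edge weights are $\cost_e$. Solvability of the MAPF instance is equivalent to reachability of $\G$ from $\S$ in $H$, and solution optimality corresponds to shortest-path distance. I would prove completeness and optimality in turn, treating LaCAM's completeness as a black box, since the grayed-out lines of \cref{algo:star} coincide with those of \cref{algo:lacam}.

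For completeness, the key observation is that the high-level exploration of \lacamstar is identical to LaCAM's: the same constraint-tree mechanism generates the same sequence of connected configurations, and each configuration is inserted into \explored at most once. Hence if the instance is solvable, $\G$ is produced after finitely many iterations, fixing $\N\goal$ to a non-null value; if unsolvable, every reachable configuration is eventually enumerated and its constraint tree exhausted, so \open drains and \nosolution is returned. The only behavioral change, namely not returning immediately at \cref{algo:star:goal}, does not interfere because the reachable portion of $H$ is finite.

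For optimality, I would establish the invariant that, at every iteration and for each $\N \in \explored$, the value $\N.g$ equals the shortest-path distance from $\S$ to $\N.\config$ in the subgraph $\widehat H$ whose edges are those collected so far into the various \neighbors sets. The update block at \reflines{algo:star:dijkstra-start}{algo:star:dijkstra-end} is precisely a Dijkstra-style relaxation triggered whenever a new edge is added, either on discovery of a new successor (\cref{algo:star:append-neighbor}) or on rediscovery of a known one (\cref{algo:star:add-neighbor-known}); standard Dijkstra correctness preserves the invariant, and the reinsertion at \cref{algo:star:reinsert} ensures that nodes with improved $g$-values are reprocessed so their descendants are also relaxed. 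Since backtracking follows the updated \parent pointers, it remains to show that when the main loop exits, $\widehat H$ agrees with $H$ on every configuration reachable from $\S$, in which case $\N\goal.g$ equals the true optimum and the backtracked path is optimal.

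The main obstacle is the last step: proving that every connectivity edge of $H$ incident to a reachable configuration eventually appears in some \neighbors set. This requires that each high-level node is not abandoned before its constraint tree has been fully exhausted, so that the configuration generator is invoked for every connected successor. I would appeal to the MAPF-specific construction of the constraint tree described in the appendix, which is designed so that exhausting the tree enumerates all connected successors of $\N.\config$. The pruning at \cref{algo:star:branching} introduces a subtlety, since it discards nodes after $\N\goal$ has been identified; I would need to argue this is safe, for instance by verifying that a pruned node $\N$ cannot lie on any path that would strictly improve $\N\goal.g$, using admissibility of $\h$ and monotonicity of $\cost_e$. Once soundness of the pruning rule is established, the Dijkstra invariant together with the completed exploration of the reachable part of $H$ yields optimality.
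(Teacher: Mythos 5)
Your proposal follows essentially the same route as the paper's proof: view the explored configurations together with the recorded \neighbors edges as an incrementally grown graph, maintain by induction that g-values and \parent pointers encode shortest paths within that graph (the invariant being restored after each newly found edge by the embedded Dijkstra pass at \reflines{algo:star:dijkstra-start}{algo:star:dijkstra-end}), and combine this with the finiteness and exhaustive successor enumeration inherited from LaCAM's constraint trees to obtain completeness and, upon termination with \open empty, optimality. The one divergence is the treatment of the blue lines: the paper analyzes \cref{algo:star} with \cref{algo:star:branching,algo:star:reinsert} removed and dismisses them in a single sentence as not breaking the optimal search structure, whereas you keep the pruning and sketch the admissibility-based justification it actually needs (a node \N with $\N.g + \h(\N) \geq \N\goal.g + \h(\N\goal)$ cannot lie on a strictly improving path, and \cref{algo:star:reinsert} revives any node whose g-value later drops below that bound so its unexhausted constraint tree can still be expanded). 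Carrying that sketch out, e.g.\ by considering the first not-fully-expanded node along a hypothetical cheaper path, would make your argument on this point somewhat more complete than the paper's; otherwise the two proofs coincide.
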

\begin{proof}
  Consider \cref{algo:star} without blue lines (\cref{algo:star:branching,algo:star:reinsert});
  they just speed up the search without breaking the optimal search structure.
  In this proof, the term ``path'' refers to a sequence of connected configurations.

  {
  \renewcommand{\hl}[1]{\textcolor{blue}{#1}}
  \newcommand{\dopen}{\m{\mathcal{D}}}
  \newcommand{\f}{\funcname{f}}
  \begin{algorithm}[t!]
    \caption{\lacamstar. $\C\sub{init}$ means ``no constraint.''}
    \label{algo:star}
    \small
    \begin{algorithmic}[1]
    \Input{MAPF instance, edge cost $\cost_e$, admissible heuristic $\h$}
    \Output{solution, \nosolution, or \failure}
    \Notation{$\f(\N) \defeq \N.g + \h(\N)$;\; $\spadesuit \defeq \left(\N\goal \neq \bot\right)$}
      \State \same{initialize $\open, \explored$;}~$\N\goal \leftarrow \bot$
      \label{algo:star:init-var}
      \State $\same{\N\init \leftarrow \bigr\langle
        \config:\S,
        \tree:\llbracket \C\init \rrbracket,
        \text{\it parent}:\bot,}   
        ~\neighbors: \emptyset,
        g: 0
        \same{\bigr\rangle}$
        \label{algo:star:init-node}
      \State \same{$\open.\push\left(\N\init\right)$; $\explored[\S] = \N\init$}
      \While{$\same{\open \neq \emptyset}~\land~\lnot\interrupt()$}
      \label{algo:star:while-start}
      \State \same{$\N \leftarrow \open.\funcname{top}()$}
      \IfSingle{$\N.\config = \G$}{$\N\goal \leftarrow \N$}
      \label{algo:star:goal}
      \IfSingleHl{$\spadesuit \land \f\left(\N\goal\right) \leq \f\left(\N\right)$}
      {$\open.\pop()$;~\Continue}
      \label{algo:star:branching}
      \IfSingleSame{$\N.\tree = \emptyset$}{$\open.\pop()$;~\Continue}
      \State \same{$\C \leftarrow \N.\tree.\pop()$}
      \State \same{$\funcname{low\_level\_expansion}(\N, \C)$}
      \label{algo:star:low-level-expansion}
      \State \same{$\Q\new \leftarrow \funcname{configuration\_generator}(\N, \C)$}
      \IfSingleSame{$\Q\new = \bot$}{\Continue}
      \If{$\explored\left[\Q\new\right] \neq \bot$}
      \label{algo:star:already-known}
      \State $\N.\neighbors.\append\left(\explored\left[\Q\new\right]\right)$
      \label{algo:star:add-neighbor-known}
      \State $\dopen \leftarrow \llbracket~\N~\rrbracket$
      \label{algo:star:dijkstra-start}
      \Comment{Dijkstra, priority queue of $g$-value}
      \While{$\dopen \neq \emptyset$}
      \State $\N\from \leftarrow \dopen.\pop()$
      \For{$\N\to \in \N\from.\neighbors$}
      \State $g \leftarrow \N\from.g + \cost_e\left(\N\from, \N\to\right)$
      \If{$g < \N\to.g$}
      \label{algo:star:g-value-pruning}
      \State $\N\to.g \leftarrow g$;~$\N\to.\parent \leftarrow \N\from$;~$\dopen.\push(\N\to)$
      \label{algo:star:dijkstra-end}
      \IfSingleHl{$\spadesuit \land \f\left(\N\to\right) < \f\left(\N\goal\right)$}
                 {$\open.\push\left(\N\to\right)$}
      \label{algo:star:reinsert}
      \EndIf
      \EndFor
      \EndWhile
      \Else
      \State $\same{\N\new \leftarrow}
      \begingroup
      \addtolength{\jot}{-0.3em}
      \begin{aligned}[t]
        \same{\bigl\langle}
        &\same{
          \config: \Q\new,
          \tree: \llbracket \C\init \rrbracket,
          \parent: \N,
        }
        \\
        &\neighbors: \emptyset,
        g: \N.g + \cost_e\left(\N, \Q\new\right)
        \same{\bigr\rangle}
        \end{aligned}\endgroup$
        \label{algo:star:new-node}
        \State \same{$\open.\push\left(\N\new\right)$;\;$\explored\left[\Q\new\right] = \N\new$}
        \State $\N.\neighbors.\append\left(\N\new\right)$
        \label{algo:star:append-neighbor}
      \EndIf
      \EndWhile
      \label{algo:star:while-end}
      \If{$\spadesuit \land \open = \emptyset$}
      \Return $\backtrack\left(\N\goal\right)$
      \label{algo:star:optimal}
      \Comment{optimal}
      \ElsIf{$\spadesuit$}
      \Return $\backtrack\left(\N\goal\right)$
      \label{algo:star:sub-optimal}
      \Comment{sub-optimal}
      \ElsIf{$\open = \emptyset$}
      \Return \nosolution
      \Else
      ~\Return \failure
      \EndIf
    \end{algorithmic}
  \end{algorithm}
}

  {
  \tikzset{
    vertex/.style={circle,draw,black,align=center,inner sep=0.05cm, minimum size=0.2cm,anchor=center},
  }
  \begin{figure}[t!]
    \centering
    \scalebox{0.9}{
    \begin{tikzpicture}
      \small
      \begin{scope}[shift={(0, 0)}]
        \node[vertex](v0) at (0, 0) {0};
        \node[vertex, minimum size=0.5cm](v1) at (v0) {};
        \node[vertex](v2) at (1.0, 0.8) {1};
        \node[vertex](v3) at (2.0, 1.0) {2};
        \node[vertex](v4) at (2.8, 0.3) {3};
        \node[vertex](v5) at (2.1, -0.3) {4};
        \node[vertex](v6) at (1.2, -0.1) {5};
        \node[vertex](v7) at (2.2, -1.0) {6};
        \node[vertex](v8) at (0.5, -0.8) {1};
        \node[](v9) at (3.3, -0.5) {};
        \node[](v10) at (3.3, 1.1) {};
        \node[anchor=west] at (-0.5, 0.9) {before};
        \foreach \u / \v in {v0/v2,v2/v3,v3/v4,v4/v5,v5/v6,v6/v7,v0/v8,v7/v9,v3/v10}
        \draw[very thick,->] (\u) -- (\v);
        \foreach \u / \v in {v4/v2,v6/v3,v3/v2,v5/v4,v6/v5}
        \draw[->,densely dashed] (\u) -- (\v);
        \draw[->,red,densely dashed,very thick](v8)--(v6);
      \end{scope}
      \begin{scope}[shift={(4.3, 0)}]
        \node[vertex](v0) at (0, 0) {0};
        \node[vertex, minimum size=0.5cm](v1) at (v0) {};
        \node[vertex](v2) at (1.0, 0.8) {1};
        \node[vertex](v3) at (2.0, 1.0) {2};
        \node[vertex](v4) at (2.8, 0.3) {3};
        \node[vertex,text=red](v5) at (2.1, -0.3) {3};
        \node[vertex,text=red](v6) at (1.2, -0.1) {2};
        \node[vertex,text=red](v7) at (2.2, -1.0) {3};
        \node[vertex](v8) at (0.5, -0.8) {1};
        \node[](v9) at (3.3, -0.5) {};
        \node[](v10) at (3.3, 1.1) {};
        \node[anchor=west] at (-0.5, 0.9) {after};
        \foreach \u / \v in {v0/v2,v2/v3,v3/v4,v3/v10,v0/v8}
        \draw[very thick,->] (\u) -- (\v);
        \foreach \u / \v in {v4/v2,v6/v3,v3/v2,v5/v6,v4/v5,v5/v4}
        \draw[->,densely dashed] (\u) -- (\v);
        \foreach \u / \v in {v8/v6,v6/v5,v6/v7,v7/v9}
        \draw[->,red,very thick](\u)--(\v);
      \end{scope}
    \end{tikzpicture}
    }
    \caption{
      Updating parents and costs.
      Each circle is a search node (i.e., configuration), including its g-value of makespan.
      Arrows represent known neighboring relations.
      Among them, solid lines represent \parent.
      The updated parts are red-colored.
      \emph{left}:~A new neighbor relationship, a red dashed arrow, has been found.
      \emph{right}:~Rewrite the search tree.
      Observe that the rewriting occurs in a limited part of the tree due to g-value pruning~(\cref{algo:star:g-value-pruning}).
    }
    \label{fig:rewrite}
  \end{figure}
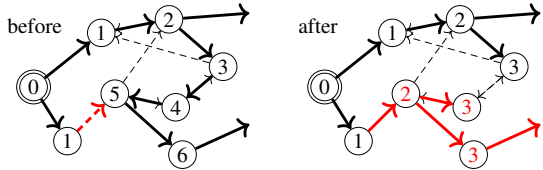
}

  First, we introduce a directed graph $H$, where its vertex corresponds to a configuration.
  Initially, $H$ has only a start vertex \S.
  Then, the search iterations gradually develop $H$.
  When a new node is created (\cref{algo:star:new-node}), its configuration is added to $H$.
  An arc $(X, Y)$ of $H$ occurs when the search finds a connection from $X$ to $Y$, i.e., $\N^Y \in \N^X.\neighbors$.

  We now prove that: \emph{($\clubsuit$)~for any configuration \Q in $H$, a path from \S to \Q constructed by backtracking (i.e., by following \N.\parent) is the shortest path in $H$, regarding accumulative transition costs, at the beginning of each search iteration}.
  This is proven by induction.
  Initially, $H$ comprises only \S, satisfying $\clubsuit$.
  Assume now that $\clubsuit$ is satisfied in the previous iteration of \reflines{algo:star:while-start}{algo:star:while-end}.
  In the next iteration, $H$ is updated by:
  \emph{(i)}~generating a new configuration (\reflines{algo:star:new-node}{algo:star:append-neighbor}), or
  \emph{(ii)}~finding a known configuration (\reflines{algo:star:add-neighbor-known}{algo:star:dijkstra-end}).
  Case \emph{(i)} holds $\clubsuit$ because only a new vertex and an arc toward the vertex are added to $H$.
  Case \emph{(ii)} holds $\clubsuit$ because only an arc is added for the node \N (\cref{algo:star:add-neighbor-known}), and then \reflines{algo:star:dijkstra-start}{algo:star:dijkstra-end} perform Dijkstra's algorithm starting from \N that maintains the tree structure of the shortest paths.

  The search space of \lacamstar is finite (see~\cite{okumura2023lacam});
  the search terminates in finite time.
  Each node eventually examines all connected configurations.
  Consequently, when terminated, $H$ includes all possible paths from \S to \G for solvable instances.
  Together with $\clubsuit$, \lacamstar returns an optimal solution, otherwise, reports the non-existence.
\end{proof}

\paragraph{Implementation Tips.}
Following~\cite{okumura2023lacam}, when finding an already known configuration at \cref{algo:star:already-known}, our implementation reinserts the corresponding node to \open.
Moreover, with a small probability (e.g., 0.1\%), the implementation reinserts a node of \S instead of the found one.
Doing so enables the search to ``escape'' from configurations being bottlenecks.
Such techniques relying on non-determinism have been seen in other search problems~\cite{kautz2002dynamic} and MAPF studies~\cite{cohen2018rapid,andreychuk2018two}.
Indeed, we informally observed that this random replacement slightly improved the success rate.
Note that the optimality still holds with these modifications.

\section{Improving Configuration Generator}
The performance of LaCAM heavily relies on a configuration generator, therefore, the development of good generators is critical.
The implementation in \cite{okumura2023lacam} uses a vanilla PIBT of \cref{algo:pibt}, resulting in poor performances in several scenarios, especially in instances with narrow corridors.
This is because PIBT itself often fails such scenarios, hence being an ineffective guide for LaCAM.
This section elaborates on this phenomenon and presents an improved version.

\subsection{Failure Analysis of PIBT}
As seen in \cref{sec:prelim:pibt}, PIBT sequentially assigns the next locations for agents.
Since this order prioritizes agents being not on their goals, livelock situations might be triggered.
See \cref{fig:pibt-failure};
two agents reach their goal vertex periodically but PIBT never reaches the goal configuration.

LaCAM can break such livelocks by posing constraints.
However, it may require significant effort because appropriate combinations of constraints should be explored.
Even worse, with more agents, the search effort dramatically increases, as demonstrated in \cref{table:dislike-example}.

\subsection{Enhancing PIBT by Swap}
Livelocks in PIBT can be resolved by \emph{swap} operation, originally developed in rule-based MAPF algorithms~\cite{luna2011push,de2014push}.
In short, this operation swaps locations of two agents using a vertex with a degree of three or more.
\Cref{fig:swap} shows an example.
Here, we extract its essence and incorporate it into PIBT.
Specifically, this is done by adjusting vertex scoring at \cref{algo:pibt:sort} in \cref{algo:pibt}.

{
  \begin{algorithm}[t!]
    \caption{procedure \PIBT with swap}
    \label{algo:pibt-swap}
    \small
    \begin{algorithmic}[1]
      \State \same{$C \leftarrow \neigh\left(\Q\from[i]\right) \cup \left\{ \Q\from[i] \right\}$}
      \State \same{sort $C$ in increasing order of $\dist(u, g_i)$ where $u \in C$}
      \label{algo:pibt-swap:sort}
      \State $j \leftarrow \funcname{swap\_required\_and\_possible}\left(i, C[1], \Q\from\right)$
      \label{algo:pibt-swap:identify}
      \IfSingle{$j \neq \bot$}{reverse $C$}
      \label{algo:pibt-swap:reverse}
      \ForSame{$v \in C$}
      \State \same{\reflines{algo:pibt:collision-check}{algo:pibt:call-pi} of \cref{algo:pibt}}
      \IfSingle{$v = C[1] \land j \neq \bot \land \Q\to[j]=\bot$}{$\Q\to[j] \leftarrow \Q\from[i]$}
      \label{algo:pibt-swap:pull}
      \State \same{\Return \valid}
      \EndFor
      \State \same{$\Q\to[i] \leftarrow \Q\from[i]$;~\Return \invalid}
    \end{algorithmic}
  \end{algorithm}
}

{
  \tikzset{
    vertex/.style={circle,draw,black,align=center,inner sep=0.1cm, minimum size=0.6cm,anchor=center},
  }
  \newcommand{\edgesize}{0.4}
  \newcommand{\drawgraph}{
    \node[vertex](v1) at (0, 0) {};
    \node[vertex,right=\edgesize of v1.center](v2) {};
    \node[vertex,right=\edgesize of v2.center](v3) {};
    \node[vertex,below=\edgesize of v2.center](v4) {};
    \node[vertex,below=\edgesize of v4.center](v5) {};
    \node[vertex,below=\edgesize of v5.center](v6) {};
    \foreach \u / \v in {v1/v2,v2/v3,v2/v4,v4/v5,v5/v6}
    \draw[] (\u) -- (\v);
  }
  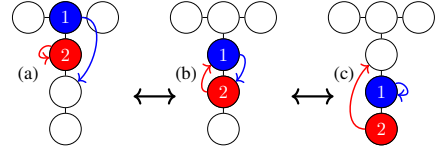
\begin{figure}[t!]
    \centering
    \scalebox{0.7}{
    \begin{tikzpicture}
      \begin{scope}[shift={(-3, 0)}]
        \drawgraph
        \node[vertex,fill=blue,text=white] at (v2) {$1$};
        \node[vertex,fill=red,text=white] at (v4) {$2$};
        \draw[thick,->,blue](v2) to[out=0,in=40] (v5);
        \draw[thick,->,red](v4) to[out=150,in=90] ($(v4)+(-0.5,0)$) to[out=-90,in=190] (v4);
        \node[below=0.5 of v1] {(a)};
      \end{scope}
      \begin{scope}[shift={(0, 0)}]
        \drawgraph
        \node[vertex,fill=blue,text=white] at (v4) {$1$};
        \node[vertex,fill=red,text=white] at (v5) {$2$};
        \draw[thick,->,blue](v4) to[out=0,in=40] (v5);
        \draw[thick,->,red](v5) to[out=180,in=220] (v4);
        \node[below=0.5 of v1] {(b)};
      \end{scope}
      \begin{scope}[shift={(3, 0)}]
        \drawgraph
        \node[vertex,fill=blue,text=white] at (v5) {$1$};
        \node[vertex,fill=red,text=white] at (v6) {$2$};
        \draw[thick,->,blue](v5) to[out=30,in=90] ($(v5)+(0.5,0)$) to[out=-90,in=-10] (v5);
        \draw[thick,->,red](v6) to[out=180,in=220] (v4);
        \node[below=0.5 of v1] {(c)};
      \end{scope}
      \draw[very thick,<->](-1,-1.5) -- (-0.2,-1.5);
      \draw[very thick,<->](2,-1.5) -- (2.8,-1.5);
    \end{tikzpicture}
    }
    \caption{
      Failure example of PIBT.
      Goals are represented by arrows.
    }
    \label{fig:pibt-failure}
  \end{figure}
}

{
  \tikzset{
    vertex/.style={circle,draw,black,align=center,inner sep=0.1cm, minimum size=0.6cm,anchor=center},
  }
  \newcommand{\edgesize}{0.4}
  \newcommand{\drawgraph}{
    \node[vertex](v1) at (0, 0) {};
    \node[vertex,right=\edgesize of v1.center](v2) {};
    \node[vertex,right=\edgesize of v2.center](v3) {};
    \node[vertex,below=\edgesize of v2.center](v4) {};
    \node[vertex,below=\edgesize of v4.center](v5) {};
    \node[vertex,below=\edgesize of v5.center](v6) {};
    \foreach \u / \v in {v1/v2,v2/v3,v2/v4,v4/v5,v5/v6}
    \draw[] (\u) -- (\v);
  }
  \begin{figure}[t!]
    \centering
    \scalebox{0.7}{
    \begin{tikzpicture}
      \begin{scope}[shift={(0, 0)}]
        \drawgraph
        \node[vertex,fill=blue,text=white] at (v4) {$1$};
        \node[vertex,fill=red,text=white] at (v5) {$2$};
        \draw[thick,->,blue](v4) to[out=0,in=40] (v5);
        \draw[thick,->,red](v5) to[out=180,in=220] (v4);
        \node[below=0.5 of v1] {(a)};
      \end{scope}
      \begin{scope}[shift={(3, 0)}]
        \drawgraph
        \node[vertex,fill=blue,text=white] at (v2) {$1$};
        \node[vertex,fill=red,text=white] at (v4) {$2$};
        \draw[thick,->,blue](v2) to[out=0,in=40] (v5);
        \draw[thick,->,red](v4) to[out=150,in=90] ($(v4)+(-0.5,0)$) to[out=-90,in=190] (v4);
        \node[below=0.5 of v1] {(b)};
      \end{scope}
      \begin{scope}[shift={(6, 0)}]
        \drawgraph
        \node[vertex,fill=blue,text=white] at (v3) {$1$};
        \node[vertex,fill=red,text=white] at (v2) {$2$};
        \draw[thick,->,blue](v3) to[out=270,in=40] (v5);
        \draw[thick,->,red](v2) to[out=220,in=120] (v4);
        \node[below=0.5 of v1] {(c)};
      \end{scope}
      \begin{scope}[shift={(9, 0)}]
        \drawgraph
        \node[vertex,fill=blue,text=white] at (v2) {$1$};
        \node[vertex,fill=red,text=white] at (v1) {$2$};
        \draw[thick,->,blue](v2) to[out=-30,in=40] (v5);
        \draw[thick,->,red](v1) to[out=270,in=180] (v4);
        \node[below=0.5 of v1] {(d)};
      \end{scope}
      \draw[very thick,->](2,-1.5) -- (2.8,-1.5);
      \draw[very thick,->](5,-1.5) -- (5.8,-1.5);
      \draw[very thick,->](8,-1.5) -- (8.8,-1.5);
    \end{tikzpicture}
    }
    \caption{
    Swap operation.
    The last two steps are omitted because of just moving two agents toward their goal.
    }
    \label{fig:swap}
  \end{figure}
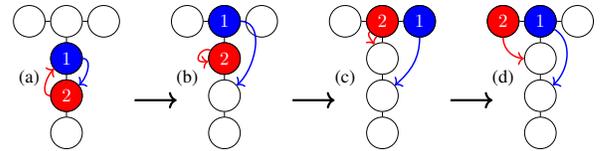
}

{
  \setlength{\tabcolsep}{5pt}
  \begin{table}[t!]
    \centering
    \small
    \begin{minipage}{0.3\linewidth}
      \includegraphics[width=1\linewidth]{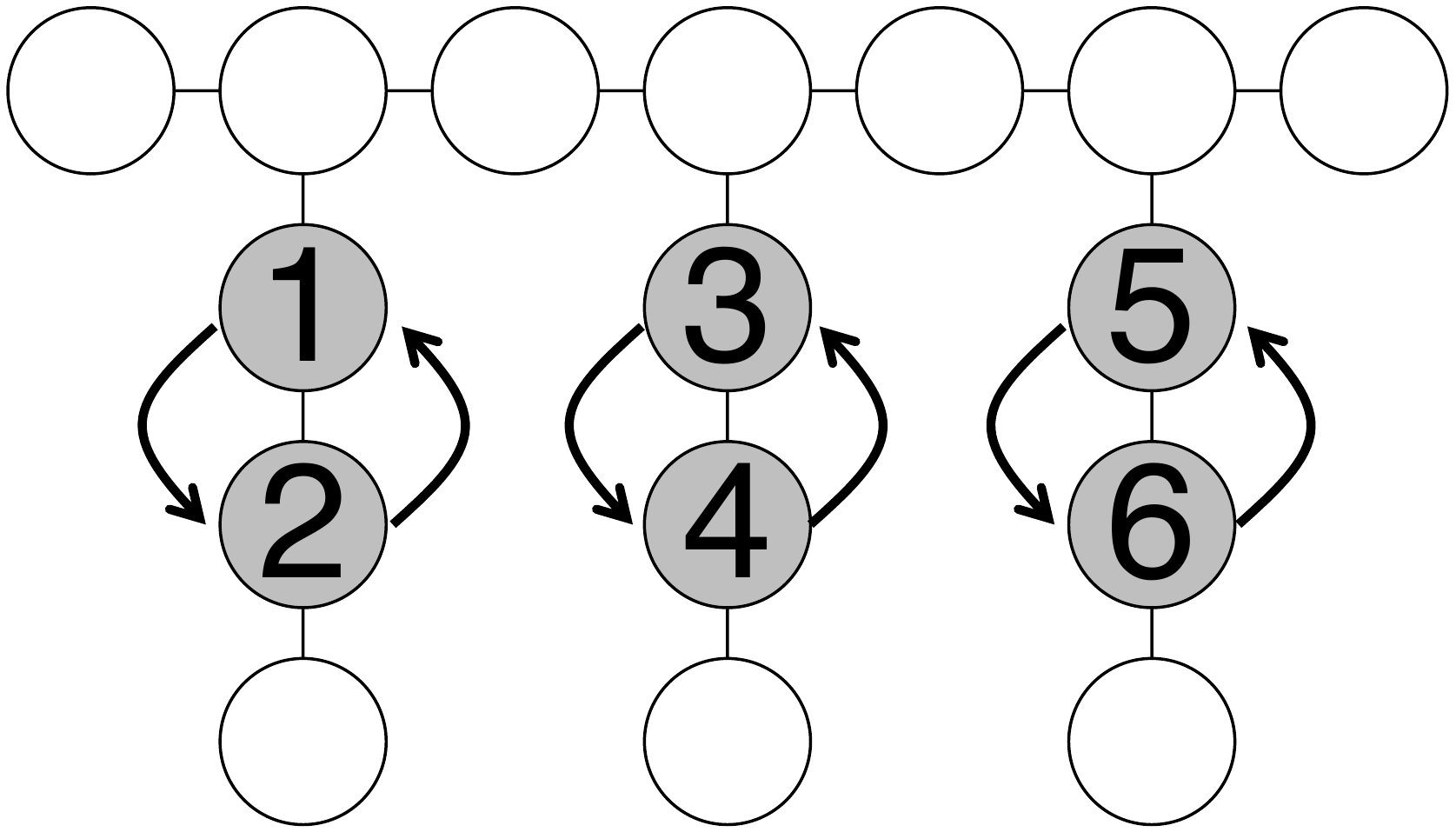}
    \end{minipage}\hspace{0.2cm}
    \begin{tabular}{lrrr}
      \toprule
      & $|A|$$=$2 & $|A|$$=$4 & $|A|$$=$6
      \\\midrule
      w/\cref{algo:pibt} & 128 & 23,907 & 287,440
      \\
      w/\cref{algo:pibt-swap} & 6 & 8 & 8
      \\\bottomrule
    \end{tabular}
    \caption{
      The number of search iterations of LaCAM to solve the instances.
      When $|A|=2$, only agents-$\{1, 2\}$ appear, and so forth.
    }
    \label{table:dislike-example}
  \end{table}
}

\Cref{algo:pibt-swap} extends the \PIBT procedure of \cref{algo:pibt}.
The modification is simple;
if an agent $i$ and neighboring agent $j$ are judged to require swapping locations (\cref{algo:pibt-swap:identify}), $i$ reverses the order of candidate vertices $C$ (\cref{algo:pibt-swap:reverse});
that is, $i$ tries to be apart from $g_i$.
Then, if $i$ successfully moves to the first vertex in the candidates $C$, $i$ \emph{pulls} $j$ to the current occupying vertex (\cref{algo:pibt-swap:pull}).
With an appropriate implementation of the function $\funcname{swap\_required\_and\_possible}$, PIBT does not fall into the livelock of \cref{fig:pibt-failure}, rather, it can generate a sequence of configurations shown in \cref{fig:swap}.

The $\funcname{swap\_required\_and\_possible}$ function is a pattern detector and implementation-depending.
\emph{We do not aim at designing complete detectors because pitfalls of the detectors can be complemented by LaCAM}.
However, a well-tuned implementation can relax the search effort of LaCAM.
Below, we illustrate our example implementation, while omitting tiny fine-tunings.

\subsubsection{Pattern Detector Implementation}
Assume that the detector is called for $i$.
Assume further that another agent $j$ is on $C[1]$ for $i$ at \cref{algo:pibt-swap:sort} of \cref{algo:pibt-swap}, such that the degree is 2 or less.
Our detector uses two emulations.

The first emulation asks about the necessity of the swap.
This is done by continuously moving $i$ to $j$'s location while moving $j$ to another vertex not equal to $i$'s location, ignoring the other agents.
The emulation stops in two cases:
\emph{(i)}~The swap is \emph{not required} when $j$'s location has a degree of more than two.
\emph{(ii)}~The swap is \emph{required} when $j$'s location has a degree of one, or, when $i$ reaches $g_i$ while $j$'s nearest neighboring vertex toward its goal is $g_i$.

If the swap is required, the second emulation asks about the possibility of the swap.
This is done by reversing the emulation direction;
that is, continuously moving $j$ to $i$'s location while moving $i$ to another vertex.
It stops in two cases:
\emph{(i)}~The swap is \emph{possible} when $i$'s location has a degree of more than two.
\emph{(ii)}~The swap is \emph{impossible} when $i$ is on a vertex with degree of one.

The function returns $j$ when the swap is required and possible.
For instance, in configurations of \cref{fig:swap}(a,b), the swap is required for both agents.
However, the swap is possible only for agent-$1$, then, the order of candidate vertices is reversed for agent-$1$ (\cref{algo:pibt-swap:reverse}).
Consequently, \cref{algo:pibt-swap} generates configurations of \cref{fig:swap}(b,c).
An exception is the case of \cref{fig:swap}c, where agent-$2$ needs to reverse its candidates to generate a configuration of \cref{fig:swap}d.
We note that this case is also possible to be detected by applying the two emulations.

\section{Evaluation}
This section empirically assesses the two improvements, comprising:
\emph{(i)}~how the improved configuration generator reduces planning effort,
\emph{(ii)}~how \lacamstar refines solution,
\emph{(iii)}~how discarding redundant nodes speeds up the convergence,
\emph{(iv)}~evaluation with small complicated instances,
\emph{(v)}~evaluation with the MAPF benchmark,
\emph{(vi)}~comparison with another anytime MAPF algorithm, and
\emph{(vii)}~evaluation with extremely dense scenarios.

\paragraph{Setup.}
The experiments were run on a desktop PC with Intel Core i7-7820X \SI{3.6}{\giga\hertz} CPU and \SI{32}{\giga\byte} RAM.
A maximum of 16 different instances were run in parallel using multi-threading.
\lacamstar was coded in C++.
All experiments used four-connected grid maps retrieved from the MAPF benchmark~\cite{stern2019def}.
Unless mentioned, this section uses a timeout of \SI{30}{\second} for solving MAPF.
Baseline MAPF algorithms are summarized in \cref{fig:percentage}.
Their implementation details are available in the appendix.

\paragraph{Effect of Improved Configuration Generator.}
\Cref{table:dislike-example} presents the number of search iterations of LaCAM on an instance that requires ``swap,'' using a vanilla PIBT (\cref{algo:pibt}) and the improved one (\cref{algo:pibt-swap}) as a configuration generator.
\Cref{table:warehouse} further compares the generators with larger instances.
The results show that \cref{algo:pibt-swap} dramatically reduced the search iterations of LaCAM, contributing to smaller computation time in large instances.
Note however that the pattern detector has runtime overhead, as seen in $|A|=100$ of \cref{table:warehouse}.

{
  \setlength{\tabcolsep}{2pt}
  \newcommand{\ci}[1]{{\tiny(#1)}}
  \begin{table}[t!]
    \centering
    \small
    \begin{tabular}{rrlrlrlrl}
      \toprule
      & \multicolumn{4}{c}{search iterations}
      & \multicolumn{4}{c}{runtime (\SI{}{\milli\second})}
      \\\cmidrule(lr){2-5}\cmidrule(lr){6-9}
      $|A|$
      & \multicolumn{2}{c}{w/\cref{algo:pibt}}
      & \multicolumn{2}{c}{w/\cref{algo:pibt-swap}}
      & \multicolumn{2}{c}{w/\cref{algo:pibt}}
      & \multicolumn{2}{c}{w/\cref{algo:pibt-swap}}
      \\
      \midrule
      100
      & 374 & \ci{344,54468}
      & 366	& \ci{338,401}
      & 65 & \ci{31,1218}
      & 112 & \ci{34,216}
      \\
      300
      & 54802 & \ci{388,369131}
      & 392 & \ci{357,482}
      & 3049 & \ci{291,18858}
      & 301 &	\ci{187,409}
      \\
      500
      & 181459 & \ci{44534,268724}
      & 410 & \ci{391,432}
      & 18063 & \ci{4598,29820}
      & 500 & \ci{347,574}
      \\
      \bottomrule
    \end{tabular}
    \caption{
        Effect of configuration generators.
        For each $|A|$, median, min, and max scores are presented for instances solved by both algorithms among 25 instances retrieved from \protect\cite{stern2019def}, on \mapname{warehouse-20-40-10-2-1}, illustrated in \cref{fig:mapf-bench-short}.
    }
    \label{table:warehouse}
  \end{table}
}

{
  \setlength{\tabcolsep}{0pt}
  \newcommand{\entry}[1]{
    \begin{minipage}{0.42\linewidth}
      \centering
      \includegraphics[width=0.9\linewidth]{fig/raw/#1.pdf}
    \end{minipage}
  }
  \begin{figure}[t!]
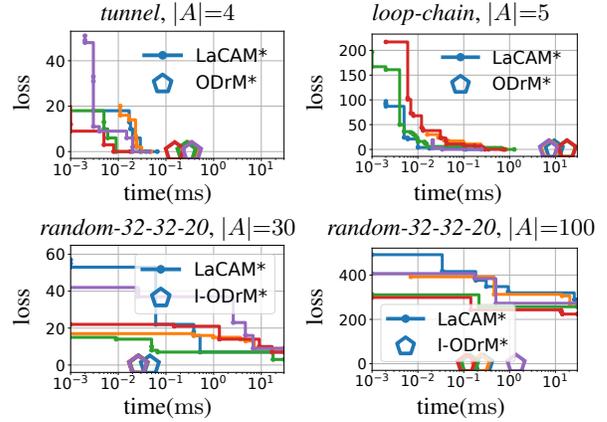

    \centering
    \small
    \begin{tabular}{lclc}
      & \mapname{tunnel}, $|A|$$=$$4$ && \mapname{loop-chain}, $|A|$$=$$5$
      \\
      \rotatebox{90}{loss}
      &
      \entry{loss_tunnel}
      &
      \rotatebox{90}{loss}
      &
      \entry{loss_loop-chain}
      \\
      &time(\SI{}{\milli\second})
      &
      &time(\SI{}{\milli\second})
      \smallskip\\
      &\mapname{random-32-32-20}, $|A|$$=$$30$
      &
      &\mapname{random-32-32-20}, $|A|$$=$$100$
      \\
      \rotatebox{90}{loss}
      &
      \entry{loss_random-32-32-20-30}
      &
      \rotatebox{90}{loss}
      &
      \entry{loss_random-32-32-20-100}
      \\
      &time(\SI{}{\milli\second})
      &
      &time(\SI{}{\milli\second})
    \end{tabular}
    \caption{
     Refinement of \lacamstar.
     Three maps were used, shown in \cref{table:small-complicated,fig:mapf-bench-short}.
     For each chart, five identical instances were used where starts and goals were set randomly.
     The optimization was for sum-of-loss.
     ``loss'' shows the gaps from scores of \ao{(I-)ODrM}.
     In \mapname{random-32-32-20}, the bounded sub-optimal version with sub-optimality of 1.5 was used because \ao{ODrM} failed to solve the instances.
     \lacamstar used \cref{algo:pibt} as a configuration generator.
    }
    \label{fig:refine}
  \end{figure}
}

{
  \setlength{\tabcolsep}{2pt}
  \renewcommand{\c}[1]{\multicolumn{1}{c}{#1}}
  \begin{table}[t!]
    \centering
    \small
    \begin{tabular}{lrrrrrr}
      \toprule
      & \c{\mapname{tree}}
      & \c{\mapname{corners}}
      & \c{\mapname{tunnel}}
      & \c{\mapname{string}}
      & \c{\mapname{loop-chain}}
      & \c{\mapname{connector}}
      \\
      \midrule
      no discard & 10K & 2M & 410M & 19M & \NA & \NA
      \\
      w/\cref{algo:pibt} & 1K & 28K & 287K & 103K & \NA & \NA
      \\
      w/\cref{algo:pibt-swap} & \w{1K} & \w{1K} & \w{199K} & \w{103K} & \NA & \NA
      \\
      \bottomrule
    \end{tabular}
    \caption{
    The number of search iterations for termination.
    ``no discard'' is blue-lines omitted version of \lacamstar.
    The metric was for makespan.
    The instances are displayed in \cref{table:small-complicated}.
    In the last two instances, \lacamstar did not terminate before the timeout.
    }
    \label{table:discarding-nodes}
  \end{table}
}

{
  \setlength{\tabcolsep}{1.2pt}
  \renewcommand{\arraystretch}{0.9}
  \newcommand{\drawmap}[1]{
    \multicolumn{2}{c}{
      \begin{minipage}{0.12\linewidth}
        \centering
        \includegraphics[width=1.0\linewidth]{fig/raw/maps/#1}
      \end{minipage}
    }
  }
  \newcommand{\lines}{
    \cmidrule(lr){2-3}
    \cmidrule(lr){4-5}
    \cmidrule(lr){6-7}
    \cmidrule(lr){8-9}
    \cmidrule(lr){10-11}
    \cmidrule(lr){12-13}
  }
  \begin{table}[t!]
    \centering
    \scriptsize
    \begin{tabular}{lrrrrrrrrrrrrc}
      \toprule
      & \multicolumn{2}{c}{\mapname{tree}}
      & \multicolumn{2}{c}{\mapname{corners}}
      & \multicolumn{2}{c}{\mapname{tunnel}}
      & \multicolumn{2}{c}{\mapname{string}}
      & \multicolumn{2}{c}{\mapname{loop-chain}}
      & \multicolumn{2}{c}{\mapname{connector}}
      \\
      \begin{minipage}{0.1\linewidth}
      unit~of\\time:~\SI{}{\milli\second}
      \end{minipage}
      & \drawmap{tree}
      & \drawmap{corners}
      & \drawmap{tunnel}
      & \drawmap{string}
      & \drawmap{loop-chain}
      & \drawmap{connector}
      \\
      &
      time & s-opt &
      time & s-opt &
      time & s-opt &
      time & s-opt &
      time & s-opt &
      time & s-opt &
      solved
      \\
      \lines
      \cmidrule(lr){14-14}
      \lacamstar
      & 0 & 1.20 & 0 & 1.23 & 0 & 1.41 & 0 & 1.81 & 2 & 6.58 & 0 & 1.62 & \multirow{2}{*}{\w{6/6}}
      \\
      after \SI{1}{\second}
      & 0 & 1.00 & 2 & 1.00 & 6 & 1.00 & 7 & 1.00 & 578 & 1.35 & 226 & 1.00 &
      \\
      \cmidrule(){1-14}
      \astar
      & 0 & 1.00 & 0 & 1.00 & 30 & 1.00 & 27 & 1.00 & 11125 & 1.00 & \NA & \NA & 5/6
      \\
      ODrM$^\ast$
      & 5 & 1.00 & 2 & 1.00 & 396 & 1.00 & 402 & 1.00 & \NA & \NA & \NA & \NA & 4/6
      \\
      I-ODrM$^\ast$
      & 1 & 1.00 & 0 & 1.50 & 70 & 1.07 & 2 & 1.25 & \NA & \NA & \NA & \NA & 4/6
      \\
      CBS
      & 71 & 1.00 & 0 & 1.00 & \NA & \NA & 149 & 1.00 & \NA & \NA & \NA & \NA & 3/6
      \\
      EECBS
      & 2 & 1.00 & 1 & 1.00 & \NA & \NA & 0 & 1.00 & \NA & \NA & \NA & \NA & 3/6
      \\
      \midrule
      OD
      & 0 & 1.00 & 0 & 1.88 & 14 & 2.73 & 0 & 1.25 & 2133 & 31.22 & 5 & 1.50 & \w{6/6}
      \\
      LaCAM
      & 0 & 1.17 & 1 & 2.12 & 92 & 2.00 & 0 & 2.25 & 55 & 17.83 & 0 & 1.56 & \w{6/6}
      \\
      PP
      & \NA & \NA & 0 & 1.00 & \NA & \NA & 0 & 1.00 & \NA & \NA & \NA & \NA & 2/6
      \\
      LNS2
      & \NA & \NA & 0 & 1.00 & \NA & \NA & 0 & 1.00 & \NA & \NA & 29 & 1.00 & 3/6
      \\
      PIBT
      & \NA & \NA & \NA & \NA & \NA & \NA & \NA & \NA & \NA & \NA & \NA & \NA & 0/6
      \\
      \pibtp
      & 0 & 3.50 & 0 & 1.25 & 0 & 4.07 & 0 & 2.12 & \NA & \NA & 0 & 1.81 & 5/6
      \\
      \midrule
      BCP
      & 194 & - & 150 & - & \NA & - & 117 & - & \NA & - & \NA & - & 3/6
      \\
      \bottomrule
    \end{tabular}
    \caption{
      Results of the small complicated instances.
      ``s-opt'' is makespan normalized by optimal ones.
      The minimum is one.
      The sum-of-loss version appears in the appendix.
      Two rows show results of \lacamstar: \emph{(i)}~scores for initial solutions and \emph{(ii)}~solution quality at \SI{1}{\second} and the runtime when that solution was obtained;
      they are an average of $10$ trials with different random seeds.
      Algorithms are categorized into \lacamstar, those optimizing makespan, sub-optimal ones, and BCP optimizing another metric (i.e., flowtime).
    }
    \label{table:small-complicated}
  \end{table}
}

{
  \setlength{\tabcolsep}{0.5pt}
  \newcommand{\entry}[3]{
    \begin{minipage}{0.185\linewidth}
      \centering
      \begin{tabular}{ll}
        \begin{minipage}{0.65\linewidth}
          \baselineskip=7pt
          {\tiny\mapname{#1}}\\
          {\tiny #2 (#3)}
        \end{minipage} &
        \begin{minipage}{0.28\linewidth}
          \includegraphics[width=1\linewidth]{fig/raw/maps/#1}
        \end{minipage}
      \end{tabular}
      \\
      \includegraphics[width=1\linewidth,height=0.70\linewidth]{fig/raw/mapf-bench/success_rate_#1}\\
      \includegraphics[width=1\linewidth,height=0.70\linewidth]{fig/raw/mapf-bench/runtime_#1}\\
      \includegraphics[width=1\linewidth,height=0.70\linewidth]{fig/raw/mapf-bench/sum_of_loss_#1}\\
    \end{minipage}
  }
  \newcommand{\labels}{
    \begin{minipage}{0.03\linewidth}
      \small
      \begin{tikzpicture}
        \node[rotate=90](l3) at (0, 6.0) {success rate};
        \node[rotate=90](l2) at (0, 3.8) {runtime (sec)};
        \node[rotate=90](l1) at (0, 1.3) {sum-of-loss / LB};
        \node[] at (0, 8) {};
      \end{tikzpicture}
    \end{minipage}
  }
  \begin{figure*}[t!]
    \centering
    \begin{tabular}{cccccc}
      \labels
      & \entry{random-32-32-20}{32x32}{$|V|$$=$819}
      & \entry{random-64-64-20}{64x64}{$|V|$$=$3,270}
      & \entry{brc202d}{530x481}{$|V|$$=$43,151}
      & \entry{warehouse-20-40-10-2-1}{321x123}{$|V|$$=$22,599}
      & \entry{maze-128-128-1}{128x128}{$|V|$$=$8,191}
      \\
      \multicolumn{6}{c}{agents:~$|A|$}
      \smallskip\\
      \multicolumn{6}{c}{\includegraphics[width=0.8\linewidth]{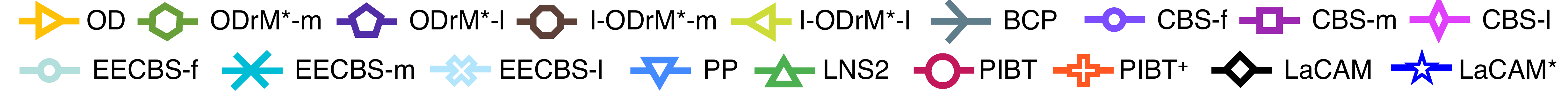}}
    \end{tabular}
    \caption{
    Results of the MAPF benchmark.
    Scores of sum-of-loss are normalized by $\sum_{i\in A}\dist(s_i, g_i)$.
    For runtime and sum-of-loss, median, min, and max scores of solved instances within each solver are displayed.
    Scores of \lacamstar are from initial solutions.
    }
    \label{fig:mapf-bench-short}
  \end{figure*}
}

\paragraph{Refinement of \lacamstar.}
\Cref{fig:refine} shows how \lacamstar refines solutions.
As baselines, we used scores of a complete and optimal algorithm called (I-)\ao{ODrM}~\cite{wagner2015subdimensional}.
In the small instances, \lacamstar quickly found initial solutions and converged to optimal ones.
Meanwhile, the convergence speed was slow in large instances with many agents.
This is due to finding new connections between known configurations becoming rare, hence reducing the chance of rewriting the search tree.

\paragraph{Effect of Discarding Redundant Nodes.}
\Cref{table:discarding-nodes} shows how discarding redundant search nodes (blue lines of \cref{algo:star}) affects the search to identify optimal solutions.
Regardless of the generators, the discarding dramatically reduced the search effort.
The reduction was larger with \cref{algo:pibt-swap} because initial solutions can be found with smaller search iterations than \cref{algo:pibt}.
Note that without the discarding, the numbers of search iterations are equivalent between \cref{algo:pibt} and \cref{algo:pibt-swap} because the search spaces are identical.
In the remaining, \lacamstar uses \cref{algo:pibt-swap} while LaCAM denotes the original implementation that uses \cref{algo:pibt}.

\paragraph{Small Complicated Instances.}
\Cref{table:small-complicated} shows the results of \lacamstar with instances retrieved from \cite{luna2011push}.
Overall, it immediately found not only initial solutions but also (near-)optimal ones.
In contrast, the baselines failed some instances or returned low-quality solutions.

\paragraph{MAPF Benchmark.}
We tested \lacamstar on the MAPF benchmark that includes 33 maps, each having 25 ``random scenarios'' which specify start-goal pairs.
From each scenario, we extracted instances by increasing the number of agents by 50 up to the maximum (1,000 in most cases) and obtained 13,900 instances in total.
The percentage of solved instances is summarized in \cref{fig:percentage}.
\Cref{fig:mapf-bench-short} presents partial results for each map.
\lacamstar only failed in the instances of \mapname{maze-128-128-1} and sub-optimally solved all the other instances within \SI{10}{\second}, outperforming the other algorithms.
The failures might be reduced by improving the pattern detector;
however, we consider such implementations are too optimized for the benchmark.
As shown in \cref{fig:mapf-bench-refine}, the refinement was steady but not dramatic due to the same reason of \cref{fig:refine}.
Further discussions are available in the appendix.

\paragraph{Comparison with Anytime MAPF Solver.}
We compared \lacamstar with AFS~\cite{cohen2018anytime}, a CBS-based anytime MAPF solver that guarantees to converge optima.
\Cref{table:anytime} summarizes the results.
Contrary to \lacamstar, AFS can obtain plausible solutions from the beginning, however, it compromises scalability.
We consider this quality gap can be overcome by developing better generators other than PIBT.

\paragraph{Extremely Dense Scenarios.}
\cref{fig:dense} reports \lacamstar in very congested scenarios that existing solvers mostly fail.
Even with such challenging cases, \lacamstar solved many instances, demonstrating its excellent scalability.

\section{Conclusion and Discussion}
The primary challenge of MAPF is to maintain solvability and solution quality while suppressing planning efforts.
To break this tradeoff, this paper presented two enhancements to LaCAM, namely, \lacamstar which eventually converges to optima and an effective configuration generator.
The enhancements were thoroughly assessed, achieving remarkable results.
From the empirical evidence, we believe that \lacamstar has developed a new frontier in MAPF.

\paragraph{Related Work.}
LaCAM$^{(\ast)}$ relates to partial successor expansion during the search, as seen in~\cite{goldenberg2014enhanced,wagner2015subdimensional}, because it also generates a subset of successors, but differs in the use of constraints and a configuration generator.
Anytime MAPF algorithms that converge to optima have been studied~\cite{standley2011complete,cohen2018anytime,vedder2021x}.
However, their scalability is limited;
they often fail to derive initial solutions, as we empirically saw.
Techniques to refine arbitrary MAPF solutions have also been studied~\cite{surynek2013redundancy,de2014push,okumura2021iterative,li2021anytime} but they do not ensure optimality.
Rewriting the search tree structure is popular in optimal motion planning~\cite{karaman2011sampling,shome2020drrt}, by which \lacamstar is partially inspired.
Incorporating swap into PIBT is inspired by sub-optimal rule-based MAPF algorithms~\cite{luna2011push,de2014push}.
Meanwhile, the solution quality of rule-based approaches themselves is often severely compromised.

\paragraph{Future Directions.}
We are interested in more effective configuration generators than PIBT variants which can output near-optimal initial solutions.
Improving the convergence speed of \lacamstar is also important.
Moreover, \lacamstar in MAPF variants, e.g., multi-robot motion planning~\cite{okumura2023sssp}, is worth to be studied.
Other than MAPF, since \lacamstar is just a graph pathfinding algorithm, applying its concept to other planning domains might be exciting.

{
  \begin{figure}
    \includegraphics[width=1\linewidth]{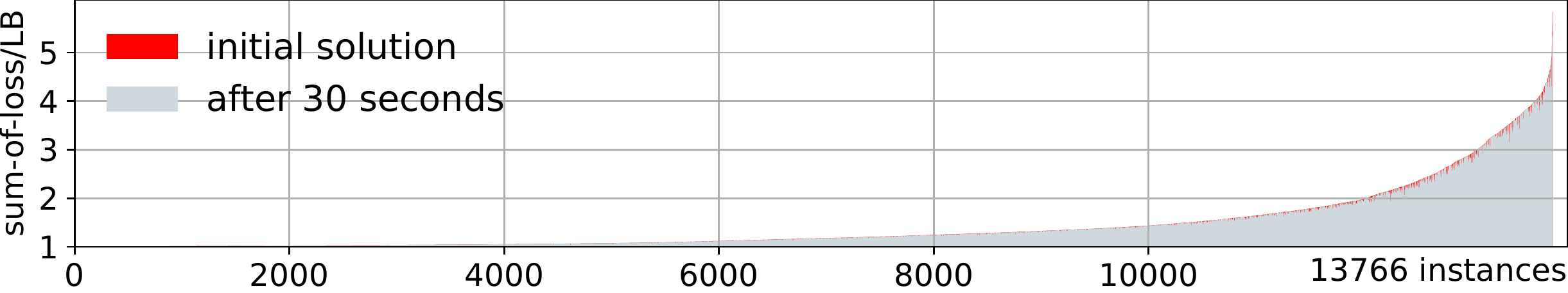}
    \caption{
      Refinement by \lacamstar for the MAPF benchmark.
      On the x-axis, the figure sorts 13,766 solved instances out of 13,900 by initial solution quality and displays the scores in red bars.
      For each instance, we also plot the solution quality at 30 seconds using gray bars.
      Hence, the effect of refinement is visualized by tiny red areas.
    }
    \label{fig:mapf-bench-refine}
  \end{figure}
}

{
  \setlength{\tabcolsep}{1.5pt}
  \begin{table}[t!]
    \centering
    \small
    \begin{tabular}{rrrrrrrrr}
      \toprule
      & \multicolumn{2}{c}{solved(\%)}
      & \multicolumn{2}{c}{time-init(\SI{}{\milli\second})}
      & \multicolumn{2}{c}{loss-init}
      & \multicolumn{2}{c}{loss-\SI{30}{\second}}
      \\
      \cmidrule(lr){2-3}
      \cmidrule(lr){4-5}
      \cmidrule(lr){6-7}
      \cmidrule(lr){8-9}
      $|A|$
      & AFS & \lacamstar
      & AFS & \lacamstar
      & AFS & \lacamstar
      & AFS & \lacamstar
      \\
      \midrule
      50 & 100 & 100 & 88 & \w{1} & \w{25} & 159 & \w{24} & 118
      \\
      100 & 56 & \w{100} & 7223 & \w{2} & \w{140} & 609 & \w{139} & 545
      \\
      150 & 0 & \w{100} & \NA & \w{4} & \NA & 1463 & \NA & 1368
      \\
      \bottomrule
    \end{tabular}
    \caption{
    Comparison of anytime MAPF algorithms.
    We used sum-of-loss and 25 ``random'' scenarios of \mapname{random-32-32-20}.
    ``init'' shows scores related to initial solutions.
    ``loss'' is the gap scores from $\sum_{i\in A}\dist(s_i, g_i)$.
    The scores are averaged for instances solved by both solvers, except for $|A|=150$ because AFS failed all.
    }
    \label{table:anytime}
  \end{table}
}

{
  \setlength{\tabcolsep}{2pt}
  \begin{table}[t!]
    \centering
    \small
    \begin{tabular}{lrrrl}
      \toprule
      map
      & $|A|$
      & \%
      & time(\SI{}{\second})
      & other algorithms
      \\
      \midrule
      {\mapname{empty-8-8}}
      & 58
      & 100
      & 0.00
      & PIBT\&LaCAM{\scriptsize~(100\%; \SI{0.00}{\second})}
      \\
      {\mapname{random-32-32-20}}
      & 737
      & 100
      & 0.63
      &LaCAM{\scriptsize~(4\%; \SI{14.81}{\second})}
      \\
      {\mapname{random-64-64-20}}
      & 2943
      & 68
      & 11.64
      & \NA
      \\
      \mapname{maze-128-128-10}
      & 9772
      & 100
      & 55.54
      & \NA
      \\
      \bottomrule
    \end{tabular}
    \caption{
      Results on extremely dense scenarios.
      $|A|$ was adjusted so that $|A|/|V|=0.9$.
      For each scenario, 25 instances were prepared while randomly placing starts and goals.
      ``\%'' is the success percentage by \lacamstar with timeout of \SI{60}{\second}.
      ``time'' is the median runtime to obtain initial solutions.
      We also tested the other solvers in \cref{fig:percentage} and report solvers that solved in at least one instance.
    }
    \label{fig:dense}
  \end{table}
}

\section*{Acknowledgments}
I am grateful to Yasumasa Tarmura for his comments on the initial manuscript.
This work was partly supported by JSPS KAKENHI Grant Numbers~20J23011 and JST ACT-X Grant Number JPMJAX22A1.
I thank the support of the Yoshida Scholarship Foundation when I was a Ph.D. student.

\bibliographystyle{named}
\bibliography{ref-macro,ref}
\appendix
\section*{Appendix}

\section{MAPF-Specific Part of LaCAM}
\Cref{algo:full} presents the full version of LaCAM (\cref{algo:lacam}).
To highlight differences, the pseudocode uses gray-out for the same lines with \cref{algo:lacam}.
In what follows, we complement the MAPF-specific part of \cref{algo:full}.

\paragraph{Constraint.}
Recall that a constraint in LaCAM is defined by each domain.
In MAPF, \emph{a constraint specifies which agent is where in the next configuration}.
Then, a node \C in the constraint tree specifies locations for multiple agents in the next configuration.
The configuration generator must follow this specification.

\paragraph{Constraint Generation.}
To determine which agent will be constrained, a high-level node \N includes \order, an enumeration of all agents sorted by specific criteria.
This is specified by two functions: \funcname{get\_init\_order} (\cref{algo:full:init-node}) and \funcname{get\_order} (\cref{algo:full:successor}).
An agent $i$ is then selected following \order and depth of the low-level node in the constraint tree, which is obtained by the function \funcname{depth} (\cref{algo:full:selection}).
Doing so ensures that each path of the constraint tree has no duplicate agents.
Constraints are generated for all possible locations from $\N.\config[i]$ (\reflines{algo:start-for-constraint}{algo:end-for-constraint}).
The constraint tree is \emph{not} developed when the depth of the low-level node equals $|A|$ (\cref{algo:full:low-start}) because  all agents have constraints in such a node.

\section{Evaluation}

\subsection{Baselines}
We tested a variety of representative or state-of-the-art MAPF algorithms that have diverse properties for solvability and optimality as follows.
See also \cref{fig:percentage}.

\begin{itemize}
\item \textbf{\astar}~\cite{hart1968formal} as a vanilla search algorithm.
  It is complete and optimal.
  The used objectives were makespan (\astar-m) and sum-of-loss (\astar-l).
\item \textbf{\astar with operator decomposition (OD)}~\cite{standley2010finding} as an adaptation of the general search to MAPF.
  It was implemented as a greedy best-first search to obtain solutions as much as possible.
  The heuristic was the sum of distance toward goals.
\item \textbf{ODrM$^\ast$}~\cite{wagner2015subdimensional} as a state-of-the-art optimal and complete algorithm, similar to \astar.
  The used objectives were sum-of-loss (ODrM$^\ast$-l) and makespan (ODrM$^\ast$-m).
  The implementation was from~\url{https://github.com/gswagner/mstar_public}.
  The original implementation uses sum-of-loss as an objective function.
  The makespan version was adapted from it.
\item \textbf{Inflated ODrM$^\ast$ (I-ODrM$^\ast$)}~\cite{wagner2015subdimensional} as a state-of-the-art bounded sub-optimal and complete algorithm, which is a variant of ODrM$^\ast$.
  The used objective is makespan (I-ODrM$^\ast$-m) and sum-of-loss (I-ODrM$^\ast$-l).
  The sub-optimality was set to five to find solutions as much as possible.
\item \textbf{BCP}~\cite{lam2022branch} as a state-of-the-art optimal solver that uses reduction to a mathematical optimization problem.
  BCP is \emph{solution complete}, namely, it cannot distinguish unsolvable instances.
  The implementation used CPLEX for mathematical optimization and was retrieved from \url{https://github.com/ed-lam/bcp-mapf}.
  The used objective was flowtime (aka. sum-of-costs), following the authors' implementation.
\item \textbf{CBS}~\cite{sharon2015conflict} with many improvement techniques as appeared in~\cite{li2021pairwise}, as a state-of-the-art optimal solver.
  This is representative of two-level combinatorial search.
  CBS is solution complete.
  The implementation was retrieved from \url{https://github.com/Jiaoyang-Li/CBSH2-RTC}.
  The used objectives were flowtime (CBS-f), makespan (CBS-m), and sum-of-loss (CBS-l).
  The original implementation uses flowtime as an objective function.
  The makespan and sum-of-loss versions were adapted from it.
\item \textbf{EECBS}~\cite{li2021eecbs} as a state-of-the-art bounded sub-optimal but solution complete algorithm, which is a variant of CBS.
  The used objectives were flowtime (EECBS-f), makespan (EECBS-m), and sum-of-loss (EECBS-l).
  The sub-optimality was set to five.
  The implementation was retrieved from~\url{https://github.com/Jiaoyang-Li/EECBS}.
  The original implementation uses flowtime as an objective function.
  The makespan and sum-of-loss versions were adapted from it.
  {
  \begin{algorithm}[t!]
    \caption{LaCAM}
    \label{algo:full}
    \small
    \begin{algorithmic}[1]
    \Input{MAPF instance}
    \Output{solution or \nosolution}
    \Preface{$\C\init \defeq \langle~\parent: \bot, \who: \bot, \where: \bot~\rangle$}
      \State \same{initialize \open, \explored}
      \Comment{stack, hash table}
      \State $\same{\N\init \leftarrow} \begin{aligned}[t]
        \same{\bigl\langle}~
        &\same{\config: \S,
        \tree: \llbracket~\C\init~\rrbracket,
        \parent: \bot,}\\
        &\order: \funcname{get\_init\_order}()
        ~\same{\bigr\rangle}\end{aligned}$
      \label{algo:full:init-node}
      \State \same{$\open.\push\left(\N\init\right)$;~~$\explored[\S] = \N\init$}
      \WhileSame{$\open \neq \emptyset$}
      \State \same{$\N \leftarrow \open.\funcname{top}()$}
      \IfSingleSame{$\N.\config = \G$}{\Return $\backtrack(\N)$}
      \IfSingleSame{$\N.\tree = \emptyset$}{$\open.\pop()$;~\Continue}
      \State \same{$\C \leftarrow \N.\tree.\pop()$}
      \If{$\depth(\C) \leq |A|$}
      \Comment{\funcname{low\_level\_expansion}}
      \label{algo:full:low-start}
      \State $i \leftarrow \N.\order\left[\depth(\C)\right]$
      \label{algo:full:selection}
      \For{$u \in \neigh(\N.\config[i]) \cup \left\{ \N.\config[i] \right\}$}
      \label{algo:start-for-constraint}
      \State $\C\new \leftarrow \langle~\parent: \C, \who: i, \where: u~\rangle$
      \State $\N.\tree.\push\left(\C\new\right)$
      \EndFor
      \label{algo:end-for-constraint}
      \EndIf
      \label{algo:full:low-end}
      \State \same{$\Q\new \leftarrow \funcname{configuration\_generator}(\N, \C)$}
      \IfSingleSame{$\Q\new = \bot$}{\Continue}
      \IfSingleSame{$\explored\left[\Q\new\right] \neq \bot$}{\Continue}
      \label{algo:full:closed}
      \State $\same{\N\new \leftarrow} \begin{aligned}[t]
        \same{\langle}~
        &\same{\config: \Q\new,
        \tree: \left\llbracket~\C\init~\right\rrbracket,
        \parent: \N,}\\
        &\order: \funcname{get\_order}(\Q\new, \N)
        ~\same{\bigr\rangle}\end{aligned}$
        \label{algo:full:successor}
      \State \same{$\open.\push\left(\N\new\right)$;\;$\explored\left[\Q\new\right] = \N\new$}
      \EndWhile
      \State \same{\Return \nosolution}
    \end{algorithmic}
  \end{algorithm}
}

\item \textbf{Prioritized planning (PP)}~\cite{erdmann1987multiple,silver2005cooperative} as a basic approach for MAPF.
  The implementation first uses a distance heuristic~\cite{van2005prioritized} for the planning order.
  Furthermore, it involves the repetition of PP with random priorities until the problem is solved.
  The implementation was adapted from~\url{https://github.com/Kei18/pibt2}.
\item \textbf{MAPF-LNS2 (LNS2)}~\cite{li2022mapf} as a state-of-the-art sub-optimal and incomplete solver, based on a large neighborhood search.
  The implementation was retrieved from~\url{https://github.com/Jiaoyang-Li/MAPF-LNS2}.
\item \textbf{PIBT}~\cite{okumura2022priority}, which is an incomplete and sub-optimal algorithm.
  A vanilla PIBT was tested because LaCAM used PIBT.
  To detect planning failure, PIBT was regarded as a failure when it reached pre-defined sufficiently large timesteps.
  The implementation was retrieved from~\url{https://github.com/Kei18/pibt2}.
\item \textbf{\pibtp}~\cite{okumura2022priority}, as a state-of-the-art scalable MAPF solver, which is incomplete and sub-optimal.
  It used push and swap~\cite{luna2011push} to complement solutions.
\item \textbf{LaCAM}~\cite{okumura2023lacam}, on which \lacamstar is based.
  This is a complete sub-optimal algorithm.
  It used a vanilla PIBT as a configuration generator.
  The implementation was from~\url{https://github.com/Kei18/lacam}.
\item \textbf{AFS}~\cite{cohen2018anytime}, an anytime version of CBS that eventually converges to optima.
  Its implementation was obtained from the authors of the paper.
  The original implementation uses flowtime as an objective function.
  The sum-of-loss versions were adapted from it.
\end{itemize}

\subsection{Small Complicated Instances for Sum-of-loss}
\Cref{table:small-complicated-sum-of-loss} presents the results of the small complicated instances for the sum-of-loss, corresponding to \cref{table:small-complicated}.
Overall, the results are similar to those for makespan;
many solvers failed in some instances or compromised the solution quality, while \lacamstar immediately found initial solutions and converged to (near-)optimal solutions.

\subsection{MAPF Benchmark}
\paragraph{How to Test Each Algorithm.}
When increasing the number of agents by $50$, we stopped testing some algorithms if they failed to solve all instances in the previous round.

\paragraph{Results.}
\Cref{fig:mapf-bench-1,fig:mapf-bench-2,fig:mapf-bench-3} present full results of the MAPF benchmark, complementing \cref{fig:mapf-bench-short}.
In addition to sum-of-loss, we also present makespan normalized by $\max_{i \in A}\dist(s_i, g_i)$.
Scores of \lacamstar are for initial solutions.
Overall, \lacamstar solved a variety of problem instances that have diverse sizes of graphs or agents, sparseness, and complexity, within several seconds.
Solution qualities of \lacamstar are comparable to other sub-optimal algorithms.
Specifically, the qualities are similar to those of PIBT and LaCAM because it is based on these algorithms.

\paragraph{Discussion of Other Algorithms.}
Although recent remarkable progress in MAPF studies, the scalability of optimal algorithms (e.g., \ao{ODrM}, BCP, CBS) is limited;
they failed to handle a few hundred agents in most cases.
Bounded sub-optimal algorithms such as \ao{I-ODrM} and EECBS can solve a variety of instances but struggle to solve challenging instances (e.g., with 1000 agents).
Sub-optimal algorithms such as PP, LNS2, or PIBT$^{(+)}$ sometimes can handle such challenging instances but still failed many instances.
From another perspective, it is ideal for an algorithm to be complete, however, the completeness can be the bottleneck for achieving speed.
This is observed in \astar, OD, and \ao{ODrM}.

In general, makespan-optimal solutions are easier to obtain than sum-of-loss-optimal ones.
This is because, given an instance, the number of makespan-optimal solutions is larger than that of sum-of-loss.
Empirically, this is validated with \ao{ODrM} and CBS in \cref{fig:percentage}.
For instance, CBS-m solved much more instances than CBS-l.
Meanwhile, we can see a reverse trend in \ao{I-ODrM}, which is a bounded sub-optimal complete algorithm;
\ao{I-ODrM}-l solved more instances than \ao{I-ODrM}-m.
We regard this as an effect of admissible heuristics.
The sum-of-loss optimization uses $\sum_{i\in A}\dist(s_i, g_i)$ while the makespan optimization uses $\max_{i\in A}\dist(s_i, g_i)$;
the former provides more informatic guidance than the latter.

{
  \setlength{\tabcolsep}{1.2pt}
  \renewcommand{\arraystretch}{0.9}
  \newcommand{\drawmap}[1]{
    \multicolumn{2}{c}{
      \begin{minipage}{0.12\linewidth}
        \centering
        \includegraphics[width=1.0\linewidth]{fig/raw/maps/#1}
      \end{minipage}
    }
  }
  \newcommand{\lines}{
    \cmidrule(lr){2-3}
    \cmidrule(lr){4-5}
    \cmidrule(lr){6-7}
    \cmidrule(lr){8-9}
    \cmidrule(lr){10-11}
    \cmidrule(lr){12-13}
  }
  \begin{table}[t!]
    \centering
    \scriptsize
    \begin{tabular}{lrrrrrrrrrrrrc}
      \toprule
      & \multicolumn{2}{c}{\mapname{tree}}
      & \multicolumn{2}{c}{\mapname{corners}}
      & \multicolumn{2}{c}{\mapname{tunnel}}
      & \multicolumn{2}{c}{\mapname{string}}
      & \multicolumn{2}{c}{\mapname{loop-chain}}
      & \multicolumn{2}{c}{\mapname{connector}}
      \\
      & \drawmap{tree}
      & \drawmap{corners}
      & \drawmap{tunnel}
      & \drawmap{string}
      & \drawmap{loop-chain}
      & \drawmap{connector}
      \\
      &
      time & s-opt &
      time & s-opt &
      time & s-opt &
      time & s-opt &
      time & s-opt &
      time & s-opt &
      solved
      \\
      \lines
      \cmidrule(lr){14-14}
      \lacamstar
      & 0 & 1.25 & 0 & 1.12 & 0 & 1.46 & 0 & 2.36 & 2 & 7.12 & 0 & (1.48) & \multirow{2}{*}{\w{6/6}}
      \\
      after \SI{1}{\second}
      & 0 & 1.00 & 13 & 1.00 & 77 & 1.00 & 8 & 1.00 & 700 & 1.44 & 408 & (1.08)
      \\
      \cmidrule(){1-14}
      \astar
      & 0 & 1.00 & 149 & 1.00 & 35 & 1.00 & 25 & 1.00 & 15124 & 1.00 & \NA & \NA & 5/6
      \\
      ODrM$^\ast$
      & 3 & 1.00 & 40 & 1.00 & 675 & 1.00 & 0 & 1.00 & \NA & \NA & \NA & \NA & 4/6
      \\
      I-ODrM$^\ast$
      & 0 & 1.00 & 0 & 1.31 & 257 & 1.08 & 0 & 1.00 & \NA & \NA & 124 & (1.39) & 5/6
      \\
      CBS
      & 70 & 1.00 & 0 & 1.00 & \NA & \NA & 180 & 1.00 & \NA & \NA & \NA & \NA & 3/6
      \\
      EECBS
      & 2 & 1.00 & 0 & 1.00 & \NA & \NA & 0 & 1.00 & \NA & \NA & 86 & (1.61) & 4/6
      \\
      \midrule
      OD
      & 0 & 1.00 & 0 & 1.50 & 14 & 2.57 & 0 & 1.20 & 2133 & 30.62 & 5 & (1.38) & \w{6/6}
      \\
      LaCAM
      & 0 & 1.23 & 1 & 1.69 & 92 & 1.91 & 0 & 3.30 & 55 & 19.15 & 0 & (1.45) & \w{6/6}
      \\
      PP
      & \NA & \NA & 0 & 1.00 & \NA & \NA & 0 & 1.00 & \NA & \NA & \NA & \NA & 2/6
      \\
      LNS2
      & \NA & \NA & 0 & 1.00 & \NA & \NA & 0 & 1.00 & \NA & \NA & 29 & (1.00) & 3/6
      \\
      PIBT
      & \NA & \NA & \NA & \NA & \NA & \NA & \NA & \NA & \NA & \NA & \NA & \NA & 0/6
      \\
      \pibtp
      & 0 & 4.38 & 0 & 1.12 & 0 & 3.91 & 0 & 2.20 & \NA & \NA & 0 & (1.68) & 5/6
      \\
      \midrule
      BCP
      & 194 & - & 150 & - & \NA & - & 117 & - & \NA & - & \NA & - & 3/6
      \\
      \bottomrule
    \end{tabular}
    \caption{
      Results of the small complicated instances for sum-of-loss.
      The time of the unit is \SI{}{\milli\second}.
      See also the caption of \cref{table:small-complicated}.
      Regarding sum-of-loss in \mapname{connector} (decorated by parentheses), the scores normalized by known best values (80) are presented due to failing to obtain the optimal scores.
    }
    \label{table:small-complicated-sum-of-loss}
  \end{table}
}

{
  \setlength{\tabcolsep}{1pt}
  \newcommand{\entry}[3]{
    \begin{minipage}{0.155\linewidth}
      \centering
      \begin{tabular}{ll}
        \begin{minipage}{0.65\linewidth}
          \baselineskip=7pt
          {\tiny\mapname{#1}}\\
          {\tiny #2 (#3)}
        \end{minipage} &
        \begin{minipage}{0.28\linewidth}
          \includegraphics[width=1\linewidth]{fig/raw/maps/#1}
        \end{minipage}
      \end{tabular}
      \\
      \includegraphics[width=1\linewidth,height=0.75\linewidth]{fig/raw/mapf-bench/success_rate_#1}\\
      \includegraphics[width=1\linewidth,height=0.75\linewidth]{fig/raw/mapf-bench/runtime_#1}\\
      \includegraphics[width=1\linewidth,height=0.75\linewidth]{fig/raw/mapf-bench/sum_of_loss_#1}\\
      \includegraphics[width=1\linewidth,height=0.75\linewidth]{fig/raw/mapf-bench/makespan_#1}\\
    \end{minipage}
  }
  \newcommand{\labels}{
    \begin{minipage}{0.02\linewidth}
      \scriptsize
      \begin{tikzpicture}
        \node[rotate=90](l3) at (0, 4.5) {success rate};
        \node[rotate=90](l2) at (0, 2.5) {runtime (sec)};
        \node[rotate=90](l1) at (0, 0.3) {sum-of-loss / LB};
        \node[rotate=90](l1) at (0, -1.8) {makespan / LB};
        \node[] at (0, 6) {};
      \end{tikzpicture}
    \end{minipage}
  }
  \begin{figure*}[th!]
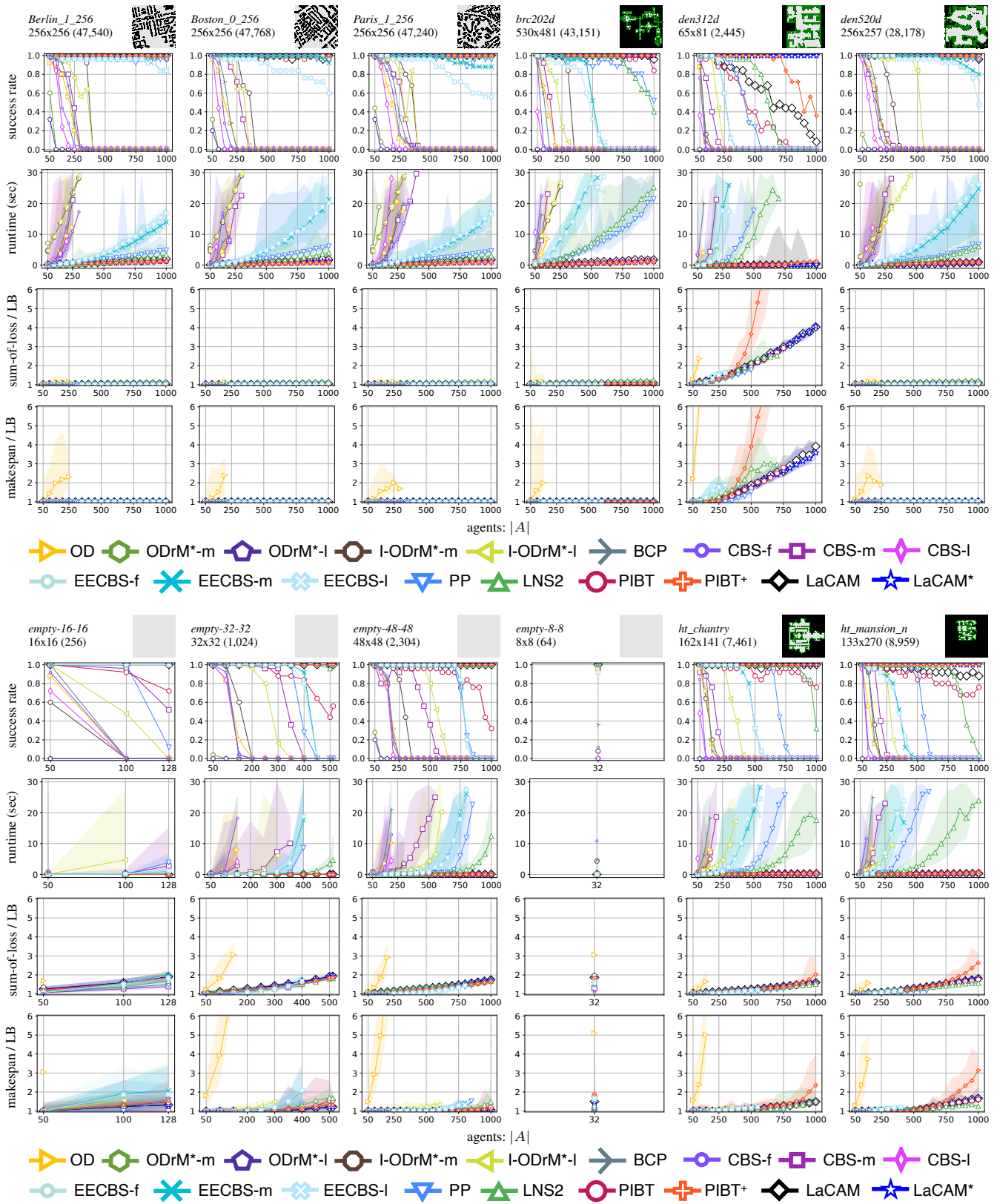

    \centering
    \begin{tabular}{ccccccc}
      \labels
      & \entry{Berlin_1_256}{256x256}{47,540}
      & \entry{Boston_0_256}{256x256}{47,768}
      & \entry{Paris_1_256}{256x256}{47,240}
      & \entry{brc202d}{530x481}{43,151}
      & \entry{den312d}{65x81}{2,445}
      & \entry{den520d}{256x257}{28,178}
      \\
      \multicolumn{7}{c}{\scriptsize agents:~$|A|$}
      \\
      \multicolumn{7}{c}{\includegraphics[width=0.8\paperwidth]{fig/raw/labels2.pdf}}
      \medskip\\
      \labels
      & \entry{empty-16-16}{16x16}{256}
      & \entry{empty-32-32}{32x32}{1,024}
      & \entry{empty-48-48}{48x48}{2,304}
      & \entry{empty-8-8}{8x8}{64}
      & \entry{ht_chantry}{162x141}{7,461}
      & \entry{ht_mansion_n}{133x270}{8,959}
      \\
      \multicolumn{7}{c}{\scriptsize agents:~$|A|$}
      \\
      \multicolumn{7}{c}{\includegraphics[width=0.8\paperwidth]{fig/raw/labels2.pdf}}
    \end{tabular}
    \caption{
    Result of the MAPF benchmark (1/3).
    See also the caption of \cref{fig:mapf-bench-short}.
    $|V|$ is shown in parentheses.
    }
    \label{fig:mapf-bench-1}
  \end{figure*}
  \begin{figure*}[th!]
    \centering
    \begin{tabular}{ccccccc}
      \labels
      & \entry{lak303d}{194x194}{14,784}
      & \entry{lt_gallowstemplar_n}{251x180}{10,021}
      & \entry{maze-128-128-1}{128x128}{8,191}
      & \entry{maze-128-128-10}{128x128}{14,818}
      & \entry{maze-128-128-2}{128x128}{10,858}
      & \entry{maze-32-32-2}{32x32}{666}
      \\
      \multicolumn{7}{c}{\scriptsize agents:~$|A|$}
      \\
      \multicolumn{7}{c}{\includegraphics[width=0.8\paperwidth]{fig/raw/labels2.pdf}}
      \medskip\\
      \labels
      & \entry{maze-32-32-4}{32x32}{790}
      & \entry{orz900d}{1491x656}{96,603}
      & \entry{ost003d}{194x194}{13,214}
      & \entry{random-32-32-10}{32x32}{922}
      & \entry{random-32-32-20}{32x32}{819}
      & \entry{random-64-64-10}{64x64}{64x64}
      \\
      \multicolumn{7}{c}{\scriptsize agents:~$|A|$}
      \\
      \multicolumn{7}{c}{\includegraphics[width=0.8\paperwidth]{fig/raw/labels2.pdf}}
    \end{tabular}
    \caption{
    Result of the MAPF benchmark (2/3).
    See also the caption of \cref{fig:mapf-bench-short}.
    }
    \label{fig:mapf-bench-2}
  \end{figure*}
  \begin{figure*}[th!]
    \centering
    \begin{tabular}{ccccccc}
      \labels
      & \entry{random-64-64-20}{64x64}{3,270}
      & \entry{room-32-32-4}{32x32}{682}
      & \entry{room-64-64-16}{64x64}{3,646}
      & \entry{room-64-64-8}{64x64}{3,232}
      & \entry{w_woundedcoast}{642x578}{34,020}
      & \entry{warehouse-10-20-10-2-1}{161x63}{5,699}
      \\
      \multicolumn{7}{c}{\scriptsize agents:~$|A|$}
      \\
      \multicolumn{7}{c}{\includegraphics[width=0.8\paperwidth]{fig/raw/labels2.pdf}}
      \medskip\\
      \labels
      & \entry{warehouse-10-20-10-2-2}{170x84}{9,776}
      & \entry{warehouse-20-40-10-2-1}{321x123}{22,599}
      & \entry{warehouse-20-40-10-2-2}{340x164}{38,756}
      \\
      \multicolumn{7}{c}{\scriptsize agents:~$|A|$}
      \\
      \multicolumn{7}{c}{\includegraphics[width=0.8\paperwidth]{fig/raw/labels2.pdf}}
    \end{tabular}
    \caption{
    Result of the MAPF benchmark (3/3).
    See also the caption of \cref{fig:mapf-bench-short}.
    }
    \label{fig:mapf-bench-3}
  \end{figure*}
}

\end{document}